\documentclass[10pt,conference]{IEEEtran}
\usepackage{blindtext, graphicx}
\ifCLASSINFOpdf
\else
\fi

\usepackage{graphicx}
\usepackage{float}
\usepackage{tikz}
\usepackage{amsmath,amsthm}
\usepackage{amssymb}
\usepackage{algpseudocode}
\usepackage{algorithm}
\usepackage{epstopdf}
\newcommand*{\permcomb}[4][0mu]{{{}^{#3}\mkern#1#2_{#4}}}

\newcommand*{\comb}[1][-1mu]{\permcomb[#1]{C}}
\usepackage{multirow}

\usepackage{subfig}
\usepackage{balance}
\usepackage{color}

\newcommand{\todo}[1]{{\color{black} #1}}

\newtheorem{lemma}{Lemma}

\newtheorem{clm}{Claim}
\newtheorem{defn}{Definition}

\newtheorem{assm}{Assumption}
\newtheorem{obj}{Objective}

\newtheorem{stp}{Step}

\usepackage{bbm}


\begin{document}
%
\title{Interference Power Bound Analysis of a Network of Wireless Robots}
\author{\IEEEauthorblockN{Pradipta Ghosh}
\IEEEauthorblockA{Ming Hsieh Department of Electrical Engineering\\
University of Southern California\\
Los Angeles, California 90089\\
Email: pradiptg@usc.edu}
\and
\IEEEauthorblockN{Bhaskar Krishnamachari}
\IEEEauthorblockA{Ming Hsieh Department of Electrical Engineering\\
University of Southern California\\
Los Angeles, California 90089\\
Email: bkrishna@usc.edu}}


%


\maketitle

\begin{abstract}
We consider a fundamental problem concerning the deployment of a wireless robotic network: to fulfill various end-to-end performance requirements, a ``sufficient'' number of robotic relays must be deployed to ensure that links are of acceptable quality. Prior work has not addressed how to find this number. We use the properties of Carrier Sense Multiple Access (CSMA) based wireless communication to derive an upper bound on the spacing between any transmitter-receiver pair, which directly translates to a lower bound on the number of robots to deploy. We focus on SINR-based performance requirements due to their wide applicability. Next, we show that the bound can be improved by exploiting the geometrical structure of a network, such as linearity in the case of flow-based robotic router networks. Furthermore, we also use the bound on robot count to formulate a lower bound on the number of orthogonal codes required for a high probability of interference free communication. We demonstrate and validate our proposed bounds through simulations.
\end{abstract}


%
\IEEEpeerreviewmaketitle

\section{Introduction}
In the field of Robotics and Automation, one of the emerging area of research is focused on the applicability of a wireless network of robots to create a temporary communication backbone between a set of communication endpoints with no or limited connectivity~\cite{williams2014route}. In these contexts, the robots act as relay nodes to form wireless communication paths between the communication endpoints. The application of this field of research ranges from fire fighting~\cite{penders2011robot} and underground mining \cite{thrun2004autonomous} to supporting temporary increase in the communication demands or creating a secure mesh network for clandestine operations~\cite{nguyen2003autonomous}. 
To the best of our knowledge, one of the \todo{unexplored} problem in this context is to determine the number of robots to deploy such that all the links can maintain certain acceptable link qualities, such as maximum allowed bit error rate (BER) or minimum supported data rate, in presence of fading and shadowing.
Interestingly, most of these link quality metrics are known to be directly related to the Signal to Interference plus Noise ratio (SINR) of the links. Now, the SINR value of a link depends on the spacing between the transmitter and receiver of the link as well as the locations of the interfering nodes. Thus, an offline characterization of SINR values as a function of the maximum allowed inter-node distance is required to properly select the number of nodes to be deployed and to properly place the nodes across a deployment region. Moreover, the presence of CSMA/CA among the robots needs to be taken into account for more practical estimation.



In our venture for a generic model to estimate the number of robots to deploy (by estimating the maximum allowed inter-node distance to maintain the target SINR), we explored the existing literature in search for a proper model of interference and Signal to Interference plus Noise Ratio (SINR) range analysis in a CSMA/CA based wireless network. There exist a large body of works that characterize the mean interference power distribution in CSMA networks  (\cite{haenggi2011mean,ganti2009interference}) by employing the concepts of point process such as Poisson Point process, Ma{t'}ern hard core process and Simple Sequential Inhibition\cite{busson2014capacity}. \emph{The basic idea of this class of work is to represent the locations of the interferers as spatial point processes, more specifically, hard core point processes where the nodes fulfil a criterion of being certain distance apart to take into account CSMA among themselves.} Through application of different point process properties such as thinning and superpositions, researchers (\cite{haenggi2011mean,elsawy2012modeling,ganti2009interference,busson2009point}) estimated the probability distributions of the mean interference powers in the presence of CSMA/CA. Interested readers are referred to \cite{cardieri2010modeling} for a detailed survey on this class of works.
Among the other class of works, the work of Hekmat and Van Mieghem~\cite{hekmat2004interference} is the most relevant to us. They demonstrated that the interference power in the presence of CSMA is actually upper bounded and can be best estimated by use of hexagonal lattice structure. 
\emph{However, this work as well as most of the other works include some assumptions such as the receiver being located at the center of a contention region, which is only acceptable if the devices follow the 802.11 RTS/CTS standards~\cite{bianchi2000performance}. Interestingly, in practice, very few commercially available products actually employ the RTS/CTS mechanism. Furthermore, the Internet of Things (IoT) and Wireless Sensor Network (WSN) standard 802.15.4, which is also a standard choice for robotic network platforms, does not use RTS/CTS mechanism, in order to avoid inefficiencies.} Thus, it is actually the transmitter that employs the CSMA and should be located at the center of the contention region, whereas, the receiver is free to be anywhere inside the transmitter's communication range. In such cases, the SINR and the interference mean values as well as the bounds for a link are, in fact, functions of the separation distance ($d$) between the endpoints of the link. \textbf{However, none of the existing works try to characterize the SINR or the interference as a function of the separation distance ($d$), which is crucial for the number of robot estimations. }
In this paper, we modify the bounds proposed in~\cite{hekmat2004interference} and flesh out details of applying the modified bounds to estimate the number of robots to be deployed to satisfy the communication performance goals. Note that, in the rest of the paper, we focus on interference limited networks and, thereby, ignore the effect of noise and focus on Signal to Interference Ratio (SIR) instead of SINR.

\begin {table*}[t]
\footnotesize

\parbox{.45\linewidth}{
\centering
\caption{General Parameters}
\label{table:symbol}
\begin{tabular}{|c|c|}
    \hline
    Symbol & Description \\
     \hline
    $T$ & Transmitter \\
    \hline $X$ & Receiver\\
    \hline $d_{ij}$ & Distance between node $i$ and $j$\\
     \hline $d$ & Distance between $T$ and $X$ i.e., $d_{TX}$\\
    \hline $\eta$ & Path Loss Exponent\\
    \hline $\psi \sim \mathcal{N}(0,\sigma^2)$ & Log normal Fading Noise with variance $\sigma^2$\\
    \hline $P_t$ & Transmitted Signal Power\\
    \hline $P_r$ & Received Signal Power\\
    \hline $P_\mathcal{I}$ & Received Interference Power \\
    \hline $\mathcal{I}^C$ & Interference Set Cover \\
    \hline $M$ & Number of Flows \\
     \hline
\end{tabular}

}
\hfill
\parbox{.45\linewidth}{
\caption{System Parameters}
\label{table:system}
\begin{tabular}{|c|c|}
    \hline
    Symbol & Description \\
     \hline
    $SIR_{th}$ & The Target Minimum SIR\\ \hline
    $SIR_X(d)$ & Minimum Achievable SIR at $X$ for $d$ separation \\ \hline
    $D_1$ & Contention Region Outer Radius\\      \hline 
    $D_2$ & Transition Region Outer Radius\\  \hline 
    $P_t$ & Transmitter Power\\  \hline 
    $\gamma$ & Required Probability of $SIR \geq SIR_{th}$ \\ \hline
    $\kappa$ & minimum probability of interference \\
    &free communication \\ \hline
    $d_{max}$ & maximum distance allowed between $T$ and $X$ \\ \hline
    $N_{\mathcal{O}}$ & Number of Orthogonal Codes\\ \hline
    $N_{\mathcal{I}}^{max}$ & Maximum Number of Interfering Nodes\\ 
     \hline
\end{tabular}

}
\end{table*}
In this paper, we \textbf{first} explain the concepts presented in \cite{hekmat2004interference} (for a general dense wireless network) as well as the impracticality of the bounds, followed by our proposed modified interference and SIR bounds as functions of the distance between a transmitter and a receiver, for any network that employs CSMA/CA. Through a set of simulation results we show that, with fading introduced in the model, we can form a stochastic bound as well, such that the probability of the real interference being higher than the bound is very low. This formulation helps any network designer to properly choose a maximum separation between the nodes and to properly place a set of nodes in any practical deployment. \textbf{Secondly,} we extend this bound one step further to determine a bound on the number of orthogonal codes to be used in order to guarantee a high probability of interference free communication. We also explore the bounds on interference power, if a fixed number of orthogonal codes are employed. \textbf{Thirdly,} we consider our application specific scenario of robotic router network to devise a better bound by applying the structure of the network. Through a set of simulation experiments we validate the bounds and show that the improved application specific bound significantly ($10\%-45\%$) decreases the required number of costly, resource constraint robots.

\section{Problem Description}
\label{sec:number}
In this section, we detail our problem formulations. For compactness, we list the symbols used for base problem formulation in Table~\ref{table:symbol} and symbols related to our goals in Table~\ref{table:system}, respectively. Say, we have a transmitter node $T$ and a receiver node $X$ that are placed at $d$ distance apart, alongside with a larger number of interfering wireless nodes. Each node of this interference limited network (i.e., the interference dominates over noise) employs \emph{Channel Sense Multiple Access with Collision Avoidance} (CSMA/CA) ~\cite{rappaport1996wireless} for wireless media access and has a transmission power of $P_t$. The radio range of each node is subdivided into three regions, centered at the node's location: a circular \textbf{connected/contention region} of radius $D_1$, an annular \textbf{transition region} with inner radius $D_1$  and outer radius $D_2$ (including the boundaries), and a \textbf{disconnected region} which is the entire region outside the circle with radius $D_2 > D_1$; where the values of $D_1$ and $D_2$ depend on the actual RSSI thresholds of the devices used~\cite{zeng2014first}. Undoubtedly, in the presence of fading, the regions are not so nicely structured, nonetheless, can be approximated by proper choice of $D_1$ and $D_2$. 
Now, the CSMA restricts the transmissions from the nodes in the contention region of $T$, while the nodes in the transition region are aware of $T$'s transmission with very low probabilities and, therefore, are the potential interferers.
However, only a subset of the nodes in the transition region can be active simultaneously, due to CSMA among themselves, which requires any two simultaneous interferers to be at least $D_1$ distance apart. The interference power from the nodes in the disconnected region are considered insignificant. 

\begin{defn}
\label{def:isc}
A set of interfering nodes ($\mathcal{I}^C$) such that $D_2 \geq d_{ij} \geq D_1$ and $d_{iT} \geq{D_1} \ \forall \ i,j\in \mathcal{I}^C$, is referred to as an \textbf{Interference Set Cover}. 
\end{defn}

Now, there are four main objectives of this work as follows.
\begin{obj}
Find a mapping between $d$ and the minimum achievable SIR at $X$, $SIR_X(d)$.  \qed
\end{obj}
\begin{obj}
Find the range, $0 < d \leq d_{max}$, such that the outage probability i.e., $\mathbb{P} (SIR_X(d)< SIR_{th}) < \gamma $ where $0 \leq \gamma \leq 0.5$ is the choice of the designer. \qed
\end{obj}

Now, one can employ a set of orthogonal codes to further restrict the interference in a CSMA network. In such cases, the maximum value of interference power decreases, based on the number of codes employed, possibly leading to near zero interference. In this context, our goal is as follows.
\begin{obj}
Characterize $SIR_X(d)$ as a function of the number of orthogonal codes ($N_{\mathcal{O}}$) employed for concurrent transmissions, and find a bound $N_{\mathcal{O}}'$ such that $\mathbb{P}(\mathbbm{1}_{\mathcal{I}0} = 1) \geq \kappa$ $\forall N_{\mathcal{O}} > N_{\mathcal{O}}'$, where the indicator function $\mathbbm{1}_{\mathcal{I}0}$ refers to interference free communication and $\kappa \geq 0.5$ is a designer choice.\qed
\end{obj}

For our SIR and Interference bound analysis, we consider two different scenarios in this paper. In the \textbf{first scenario}, the node pair in focus is placed in a \emph{``dense''} network, where a countably many \emph{uncontrollable wireless nodes} are co-located in the area of interest. 
\textbf{Secondly}, we consider our target application of robotic router placement, where the goal is to place a set of robots such that they form multihop links between a set of maximum $M$ concurrent communication end-point pairs. This application context restricts the possible configuration of the interfering nodes within a class of network formations, such as straight line formation, that voids the earlier dense network assumption. 
At any time instance, we associate a set of routers with each flow $i \in \{1,2,\cdots M\}$ that form a chain between the communication endpoints. 
\emph{Thus, for a fixed set of communication endpoints of a flow $i$, the minimum number of nodes ($N_i^\mathcal{R}$) to be allocated to flow $i$ depends on $d_{max}$ which in turn controls the minimum number of nodes to be deployed, $N^\mathcal{R} \geq \sum_{i=1}^M  N_i^\mathcal{R}$.}
\begin{obj}
Find a better and tighter bound on interference as well as SIR by exploiting the application specific restrictions on the network configurations. Next, analyze the improvement in the number of robots required, with this improved bound.\qed
\end{obj}

\section{Outline of the Proposed Solution}
\label{sec:proposed_sol}
In this section, we summarize our methodologies for achieving the target objectives while the details are discussed later on.
\subsection{Methodology for Mapping from $d$ to $SIR_X$}
\label{sec:summ_step}
For a fixed value of the separation distance $d$ between $T$ and $X$, we estimate the maximum feasible interference as well as minimum feasible SIR, by exploiting the geometry of the connectivity region and transition region. 
For received power modelling, we opt for the standard log normal fading model \cite{rappaport1996wireless}, where the received power is distributed log normally with mean power calculated using simple path loss model. Thus, the received power can be represented as:
\begin{equation}
\label{eqn:lognormal}
    P_r(d)=Q. P_t d^{-\eta}10^{\frac{\psi }{10}}
\end{equation}
where $Q$ is some constant.
Next, we introduce the following claim as our whole estimation process revolves around this claim.
\begin{clm}
\label{clm:mean}
In presence of Independent and Identically Distributed (I.I.D) fading noise, the Interference Set Cover (see Definition~\ref{def:isc}) with maximum mean power as well as maximum number of interferers will give us better stochastic bound than any other Interference Set Cover. 
\end{clm}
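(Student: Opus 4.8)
The plan is to translate the statement into a comparison between the interference random variables induced by different Interference Set Covers, and to show that the ``maximal'' cover produces a random variable that is stochastically the largest, so that every high-probability upper bound (the ``stochastic bound'') computed for it is the most conservative, i.e.\ the safe one a designer must adopt. Writing the aggregate interference at $X$ for a cover $\mathcal{I}^C$ as
\begin{equation}
P_\mathcal{I}(\mathcal{I}^C)=\sum_{i\in\mathcal{I}^C} a_i\,Z_i,\qquad a_i=Q\,P_t\,d_{iX}^{-\eta},\quad Z_i=10^{\psi_i/10},
\end{equation}
with $\{Z_i\}$ i.i.d.\ lognormal by~\eqref{eqn:lognormal}, I would first fix the meaning of ``better stochastic bound'' as a threshold $B_\gamma$ with $\mathbb{P}(P_\mathcal{I}>B_\gamma)\le\gamma$, i.e.\ a high quantile of $P_\mathcal{I}$. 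The key reduction is that if the interference of one cover first-order stochastically dominates that of another, then its quantile $B_\gamma$ is at least as large for every $\gamma$, so it yields the most conservative bound; the whole claim then follows once I exhibit a dominating cover.

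Next I would dispose of the ``maximum number of interferers'' half. Since each term $a_i Z_i\ge 0$ almost surely, appending an additional admissible interferer to a cover adds a nonnegative summand, so on a common probability space $P_\mathcal{I}$ can only increase pathwise; this gives first-order stochastic dominance and hence a pointwise-larger quantile. Thus, among otherwise comparable covers, the one with the largest admissible interferer count dominates.

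For the ``maximum mean power'' half I would compare covers of equal cardinality. The mean of each summand is $c\,a_i$ with $c=\mathbb{E}[Z_i]=\exp\!\bigl((\sigma\ln 10/10)^2/2\bigr)$, so maximizing the mean is equivalent to maximizing $\sum_i d_{iX}^{-\eta}$, which the CSMA geometry attains by driving every interferer to its minimum feasible distance from $X$ subject to $d_{iT}\ge D_1$ and pairwise $D_1$-separation. Because this extremal placement makes each coefficient $a_i$ individually maximal, it dominates any competing equal-cardinality cover termwise, and I would again couple the two covers through the shared $\{Z_i\}$ to upgrade this termwise comparison to pathwise, and hence stochastic, dominance. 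Combining the two halves, the cover that simultaneously realizes the maximum interferer count and the maximum per-node proximity (hence maximum mean) stochastically dominates every other Interference Set Cover and therefore furnishes the most conservative stochastic bound.

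I expect the mean half to be the main obstacle, for two reasons. First, maximizing the total mean $\sum_i a_i$ does not by itself yield a termwise ordering of the coefficients, so the clean pathwise coupling only works once one argues that the interference-maximizing geometry actually places every node at a coefficient-maximal location; if the packing constraints obstruct this, I would fall back on a lognormal-sum (Fenton--Wilkinson) approximation, under which the tail probability is monotone increasing in the matched mean and the conclusion follows directly. Second, the two optimality conditions---most nodes versus closest nodes---can compete under the $D_1$-separation constraint, so I would need to verify that a single admissible cover attains both, most naturally the densest feasible packing of the transition region nearest $X$, rather than treating the two quantities as independently optimized.
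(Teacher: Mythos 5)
Your route is genuinely different from the paper's. The paper's own argument (explicitly labelled a \emph{Justification}, not a proof) is a two-sentence moment heuristic: with I.I.D.\ fading, the cover with the most nodes has the largest variance, so the cover with the largest mean \emph{and} the most nodes yields the most conservative bound---implicitly treating the bound as ``mean plus a multiple of standard deviation.'' You instead formalize ``better stochastic bound'' as an upper quantile and try to establish first-order stochastic dominance by pathwise coupling, splitting the comparison into a cardinality half (appending nonnegative summands) and a mean half (termwise-larger coefficients at equal cardinality). Where your hypotheses hold, your argument is rigorous and strictly stronger than the paper's: dominance gives monotone quantiles at every level $\gamma$ with no distributional approximation. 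It also exposes a looseness the paper glosses over: the variance of $\sum_i a_i Z_i$ is $\sigma_Z^2 \sum_i a_i^2$, so ``more nodes $\Rightarrow$ more variance'' is not a consequence of I.I.D.\ fading alone; it additionally requires that the coefficients not shrink as nodes are added.

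However, the gap you flag at the end is real and is the crux. The claim requires comparing the single extremal cover against an \emph{arbitrary} Interference Set Cover, and such a cover need be neither a superset relation partner of the extremal one (so the appending argument does not apply) nor an equal-cardinality cover with termwise-dominated sorted coefficients (so the coupling argument does not apply). Maximizing $\sum_i d_{iX}^{-\eta}$ under the $D_1$-packing constraints does not force every coefficient to be individually maximal, and the count-maximizing and mean-maximizing covers need not coincide. Neither of your two halves, nor their combination, closes this general case; your fallback (Fenton--Wilkinson matching of the first two moments, with upper tails monotone in matched mean and variance) is essentially a cleaned-up version of the paper's own heuristic, and it is an approximation rather than a proof. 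So your proposal, carried out as written, establishes the claim only for the restricted comparisons above; for the general statement it is no more complete than the paper's justification---which itself concedes the point elsewhere by validating the bounds only through simulation.
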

\begin{proof}[Justification]
This claim is justified by the fact that, if the fading noises are I.I.D, the Interference Set Cover with maximum number of nodes will give the highest variance. Thus, the Interference Set Cover with highest mean as well as highest number of nodes will be a better bound than any other Interference Set Cover.
\end{proof}

Now, the main steps for representing $SIR_X$ as a function of $d$ are as follows.
\begin{stp}
\label{stp:1}
    We first identify the \textbf{Interference Set Cover(s)} ($\mathcal{I}^C$) that will potentially give us the best estimate of the maximum feasible mean interference power, for a fixed $d$, using greedy algorithm.  
\end{stp}
\begin{stp}
    We estimate the maximum number of nodes in any \textbf{Interference Set Cover}, $N_\mathcal{I}^{max}$. 
\end{stp}
\begin{stp}
\label{stp:3}
    To get the maximum interference power, we add up the interference powers of the nodes of the Interference Set Covers selected in Step~\ref{stp:1}, according to Eqn~\eqref{eqn:lognormal}. Thus the total interference power at $X$ is a sum of log normal variables as follows.
    
    {\footnotesize
    \begin{equation}
    \label{eqn:pathloss1a}
        P_{\mathcal{I}^C}(d)= Q. \sum_{j \in \mathcal{I}^C} P_t d_{jX}^{-\eta} 10^{\frac{\psi }{10}}
    \end{equation}
    }
\end{stp}
\begin{stp}
    We multiply the interference power estimate in Step~\ref{stp:3} by a correction factor $\zeta=\max \{1, \frac{N_\mathcal{I}^{max}}{|\mathcal{I}^C|} \}$, where $|.|$ denotes the cardinality of a set, to account for the Interference Set Covers with less than  $N_\mathcal{I}^{max}$ number of nodes, i.e., $|\mathcal{I}^C| < N_\mathcal{I}^{max}$. Now, the modified interference power is:
   
    {\footnotesize
    \begin{equation}
    \label{eqn:pathloss1}
        P_{\mathcal{I}^C}(d)= \zeta. Q. \sum_{j \in \mathcal{I}^C} P_t d_{jX}^{-\eta} 10^{\frac{\psi}{10}} 
    \end{equation}
    }
\end{stp}

\begin{stp}
    We calculate the SIR value for each of the Interference Set Covers selected in Step~\ref{stp:1} in dB, as follows. 
    
    {\footnotesize
    \begin{equation}
    \label{eqn:pathloss2}
    SIR_X(d)=  10 \log_{10} \left( \frac{P_t d^{-\eta}10^{\frac{\psi }{10}}}{\zeta.\sum_{j \in \mathcal{I}^C} P_t d_{jX}^{-\eta}
    10^{\frac{\psi}{10}}} \right)
    \end{equation}
    }
\end{stp}

\subsection{Methodology for Selecting $d_{max}$}
\label{sec:select_d}
    
In order to properly select $d_{max}$, first of all, we need to estimate the distribution of the $SIR_X(d)$ using Eqn~\eqref{eqn:pathloss2},  which is not very straightforward as it involves division and summation of a large set of log normal random variables. The traditional log normal summation methods involve sampling and filtering to fit the distribution into an approximated log normal~\cite{safak1993statistical}. 
We opt for similar approach where we collect a good number of samples, say $50000$, from each of the contributing log normal distributions, for a fixed $d$, to generate the SIR samples ($SIR_X(d)$) and use the SIR samples to determine the mean, $\mu_{SIR_X(d)}$, the variance of the SIR, $\sigma^2_{SIR_X(d)}$ and the empirical probability distribution function (PDF) of the ${SIR_X(d)}$. \todo{A rigorous mathematical PDF formulation is one of our future works.}
\emph{Note that in presence of fading, using simple path loss model, we can easily get the mean powers received from each interferer, which can be used to estimate $\frac{\mathbb{E} (P_r)}{\mathbb{E} (P_{\mathcal{I}})}$, but, not the mean SIR, i.e., $\mathbb{E}(SIR)= \mathbb{E} \left( \frac{P_r}{P_{\mathcal{I}}}\right) \not= \frac{\mathbb{E} (P_r)}{\mathbb{E} (P_{\mathcal{I}})} $.} 

\begin{stp}
    To properly select $d_{max}$, we first choose an acceptable value for $SIR_{th}$ and $\gamma$. Next, we use the samples of ${SIR}_X(d)$ to estimate the outage probability $\Gamma (d)=\mathbb{P} (SIR_X(d)< SIR_{th})$, for a uniformly selected values of $d \in \left[ 0, D_1 \right] $. The highest value of $d$ that satisfies $\Gamma (d) < \gamma$ is the estimated $d_{max}$. 
\end{stp}

\subsection{Orthogonal Code Bound For Interference Free Network}
First of all, say, $N_{\mathcal{O}}$ number of orthogonal codes are used and each node chooses a code randomly (all codes are equally likely to be chosen) and independently.
The new code specific interference power bound for a randomly selected Interference Set Cover ($\mathcal{I}^C$) will be:
{\footnotesize
\begin{equation}
\begin{split}
    P_{\mathcal{I}^C}(d|\mathcal{O}_T)&=\sum_{j=1}^{|\mathcal{I}^C|}P_{\mathcal{I}^C}^{j}\times \mathbbm{1}_{\{\mathcal{O}_j=\mathcal{O}_T\}}\\
    \mathbb{E} (P_{\mathcal{I}^C}(d))&=\frac{1}{(N_{\mathcal{O}})}\sum_{j=1}^{|\mathcal{I}^C|}\mathbb{E} (P_{\mathcal{I}^C}^{j})
\end{split}
\label{eqn:ortho_bound}
\end{equation}
}
where $\mathcal{O}_T$ is the code chosen by $T$, $P_{\mathcal{I}^C}^{j}$ denotes the interference power due to $j^{th}$ interferer in ${\mathcal{I}}^{C}$, and the indicator function $\mathbbm{1}_{\{\mathcal{O}_j=\mathcal{O}_T\}}$ denotes whether the $j^{th}$ interferer have chosen same code as the transmitter i.e., $\mathcal{O}_T$. 
\textbf{Notice that, the Interference Set Cover with maximum mean interference power will still give us the maximum mean estimated interference power in presence of orthogonal codes.}
\begin{stp}
    We use the estimated Interference Set Cover from Step~\ref{stp:1} to determine the new SIR bounds as follows.
    {\footnotesize
    \begin{equation}
    \label{eqn:ortho_sinr_bound}
    SIR_{\mathcal{I}^C}(d|\mathcal{O}_T)=  \frac{P_t d^{-\eta}10^{\frac{\psi}{10}}}{\zeta . \sum_{j \in \mathcal{I}^C}\left(P_t d_{jX}^{-\eta} 10^{\frac{\psi}{10}}\right). \mathbbm{1}_{\{\mathcal{O}_j=\mathcal{O}_T\}}}
    \end{equation}
    }
\end{stp}

Now, at any time instance, maximum $N^{max}=(N_{\mathcal{I}}^{max}+1)$ number of nodes can be active simultaneously. 
Given that $N_{\mathcal{O}}\geq N^{max}$, we deduce that (Proof in Appendix~\ref{App:AppendixO}):
{\footnotesize
\begin{equation}
    \begin{split}
        &\mathbb{P}(\mathbbm{1}_{\mathcal{I}0} = 1) \geq \prod_{i=1}^{N^{max}} \left( 1- \frac{i-1}{N_{\mathcal{O}}} \right)
    \end{split}
    \label{eqn:ortho}
\end{equation}
}
From Eqn~\eqref{eqn:ortho}, we can see that for  $N_{\mathcal{O}}\geq N^{max}$, $\prod_{i=1}^{N^{max}} \left( 1- \frac{i-1}{N_{\mathcal{O}}} \right)$ is a strictly increasing function of $N_{\mathcal{O}}$. 
\begin{stp}
    To find the optimum value of $N_{\mathcal{O}}$, we estimate $\prod_{i=1}^{N^{max}} \left( 1- \frac{i-1}{N_{\mathcal{O}}} \right)$ for increasing value of  $N_{\mathcal{O}}$ (starting from $N^{max}$), and select the minimum value of $N_{\mathcal{O}}$ such that $\prod_{i=1}^{N^{max}} \left( 1- \frac{i-1}{N_{\mathcal{O}}} \right) \geq \kappa$. 
\end{stp}


\section{Identification of Maximum Power Interference Set Cover}
\label{sec:dense_inter}
In the section, we identify the Interference Set Covers that result in the highest total interference power at a given receiver location, $X$, for both scenarios i.e., dense random network and robotic router network.

\subsection{Dense Random Network}
\label{sec:dense_rand}
In \cite{hekmat2004interference}, Hekmat and Van Mieghem showed that the mean interference power in CSMA Network is bounded by the interferers located along the hexagonal rings centred at the receiver's location, where the $i^{th}$ ring with each side length equal to $i\times D_1$ contains $6*i$ nodes. While the assumption of putting the receiver at the center is valid in the presence of RTS/CTS mechanism in CSMA, in reality, RTS/CTS mechanism is \textbf{NOT} employed in most of the enterprise wireless networks as well as Internet of Things (IoT) networks. \emph{In such cases, the transmitter is the node to be located at the center of the rings while the receiver is free to be located anywhere in the connected region of $T$. With this modification, the maximum feasible interference can actually be higher than the bound estimated in \cite{hekmat2004interference} e.g., when $X$ is located at the farthest point of the connected region of $T$. Moreover, for determining the number of nodes to deploy, we need to know the maximum separation distance ($d_{max}$) that can support an acceptable maximum interference level, in order to place a set of nodes in any area of deployment.} This requires us to modify the bounds to have a separation distance ($d$) dependency.
However, hexagonal packing is known to be the densest packing in circular spaces which leads us to believe that the distance dependent interference are also bounded by the interference power of the set of interferers located at hexagonal rings (similar to \cite{hekmat2004interference} but in an annular ring) around the Transmitter's location. 
With this assumption, our focus becomes restricted to all possible sets of locations that form such hexagonal packing. \emph{We can easily prove that, with the separation distance $d>0$, we only need to consider two different angular orientations of such hexagonal packing, as illustrated in Figures~\ref{fig:inter_cover} and~\ref{fig:inter_cover1}.} 
\begin{figure}[!ht]
    \centering
    \subfloat[Configuration 1]{\label{fig:inter_cover}\includegraphics[width=0.5\linewidth]{./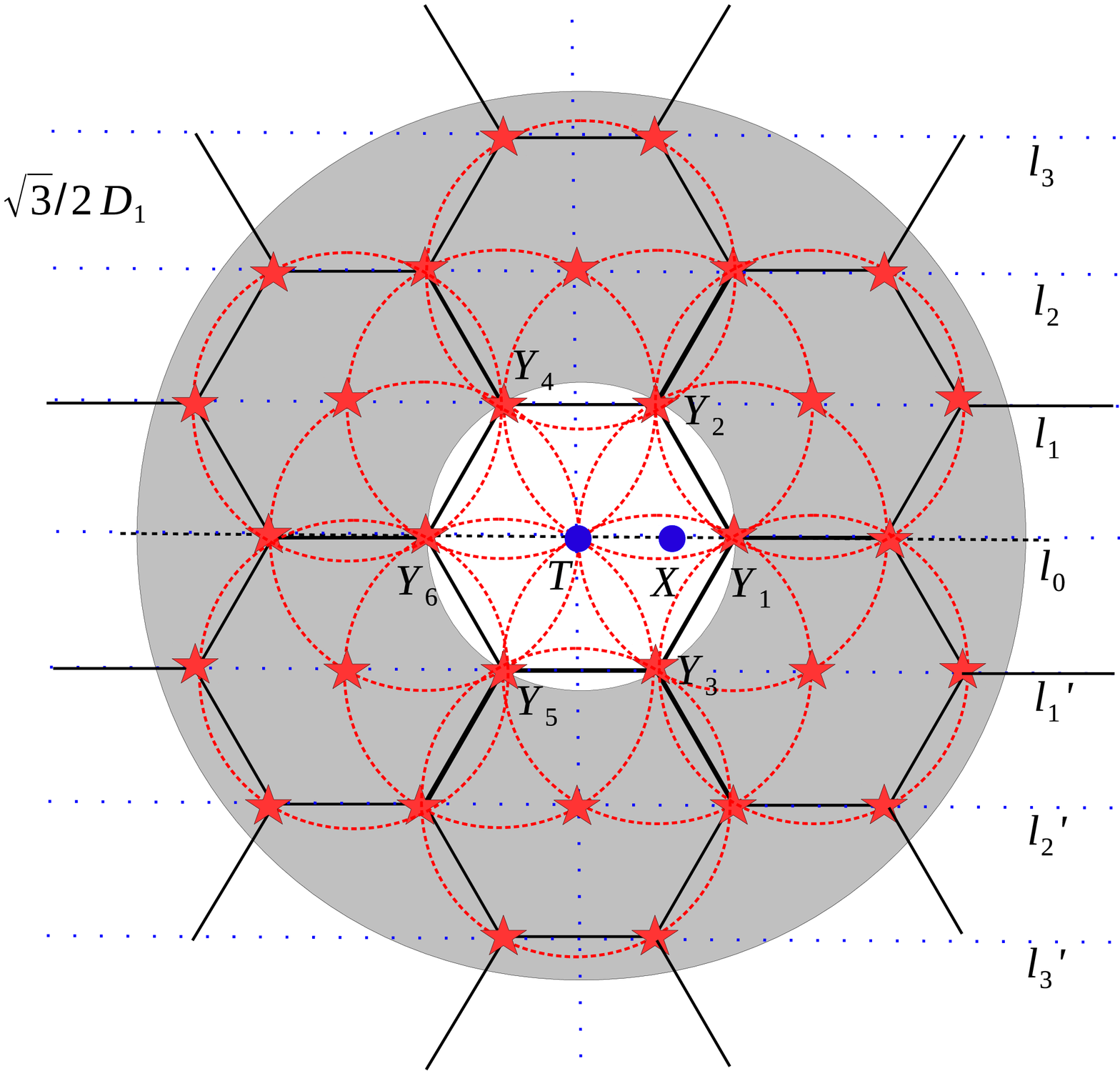}}
    \subfloat[Configuration 2]{\label{fig:inter_cover1}\includegraphics[width=0.5\linewidth]{./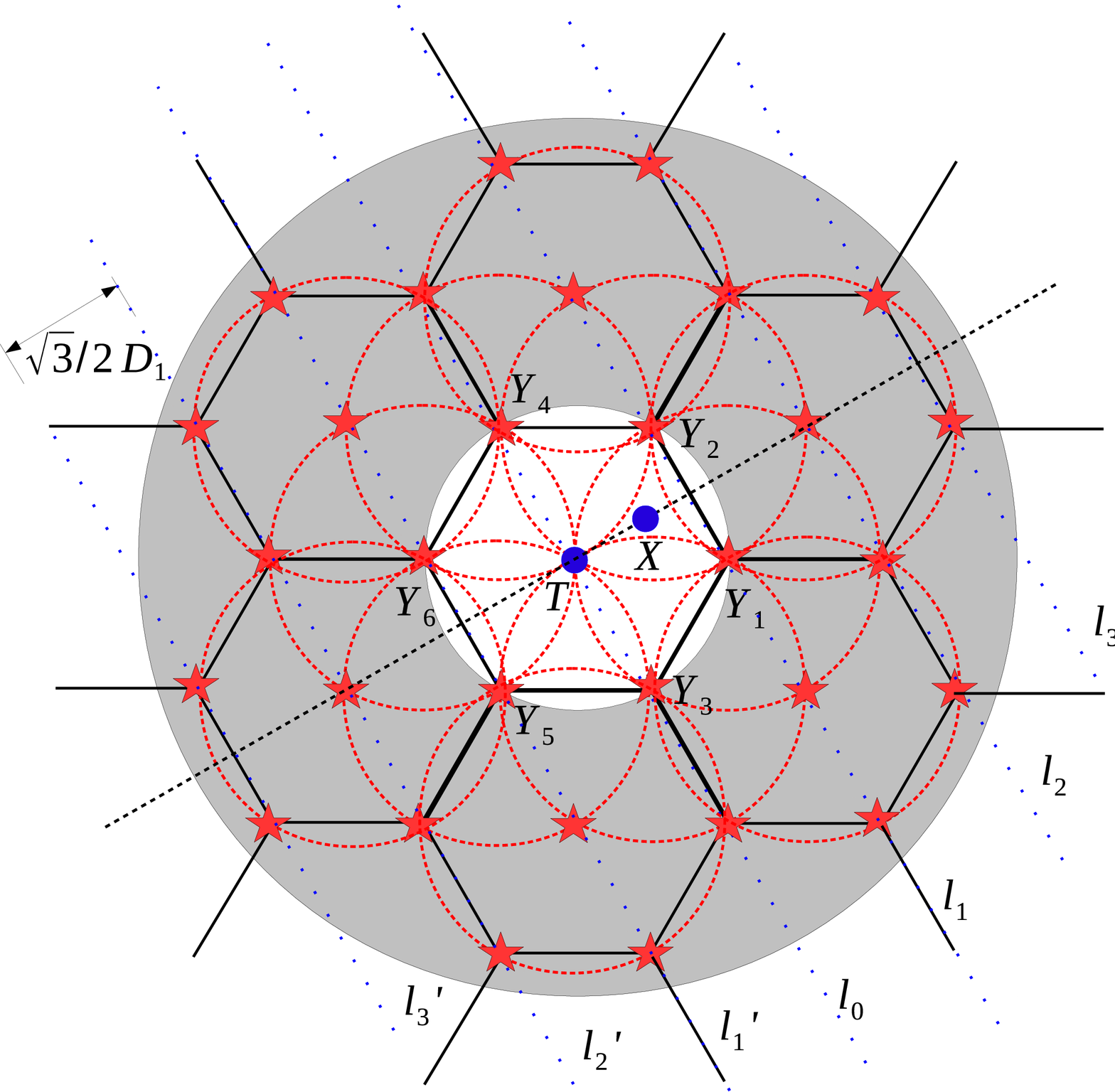} }
    \caption{Illustration of the Interference Set Covers For Estimation of Interference Upper Bound in a Dense Network}
\end{figure}

\begin{table*}[!ht]
    \caption{Interference Set Cover Node Locations for a Dense Network}
    \centering
    \begin{tabular}{|c|c|c|c|}
    \hline
    Line Number & \multirow{2}{*}{Configuration 1} & \multirow{2}{*}{Configuration 2}& \\
    (Illustrated in Figures\ref{fig:inter_cover} and~\ref{fig:inter_cover1})& & & \\
    \hline
    \multirow{2}{*}{$l_0$} & $\{(\pm jD_1,0)\}$ $\forall j \in \{1,2,\cdots, N_0+1\}$ & $\{(0,\pm jD_1)\}$ $\forall j \in \{1,2,\cdots, N_0+1\}$ & \multirow{ 2}{*}{$N_0=\lfloor \frac{D_2-D_1}{D_1} \rfloor $}\\
    &  &  & \\ \hline
    
    \multirow{2}{*}{$l_k$ where $k$ is odd } & $\{(\pm \frac{D_1(1+2\times j)}{2},  \frac{\sqrt{3}}{2}kD_1)\}$ & $\{( \frac{\sqrt{3}}{2}kD_1,\pm \frac{D_1(1+2\times j)}{2})\}$ & \multirow{ 3}{*}{$N_k=\lfloor \frac{\left(D_2^2-\frac{3}{4}k^2 D^2_1\right)^{\frac{1}{2}}}{D_1} \rfloor $}\\
     & & & \\ 
    $\forall k \in \{ 1, \lfloor \frac{2D_2}{\sqrt{3}D_1} \rfloor\}$ & $\forall j \in \{0,1,\cdots, N_k\}$ & $\forall j \in \{0,1,\cdots, N_k\}$ & \\ \hline
    
    \multirow{2}{*}{$l'_k$ where $k$ is odd } & $\{(\pm \frac{D_1(1+2\times j)}{2}, - \frac{\sqrt{3}}{2}kD_1)\}$ & $\{(- \frac{\sqrt{3}}{2}kD_1,\pm \frac{D_1(1+2\times j)}{2})\}$ & \multirow{ 3}{*}{$N_k=\lfloor \frac{\left(D_2^2-\frac{3}{4}k^2 D^2_1\right)^{\frac{1}{2}}}{D_1} \rfloor $}\\
     & & & \\ 
    $\forall k \in \{ 1, \lfloor \frac{2D_2}{\sqrt{3}D_1} \rfloor\}$ & $\forall j \in \{0,1,\cdots, N_k\}$ & $\forall j \in \{0,1,\cdots, N_k\}$ & \\ \hline
    
    \multirow{2}{*}{$l_k$ where $k$ is even } & $\{(\pm jD_1, \frac{\sqrt{3}}{2}kD_1)\}$ & $\{( \frac{\sqrt{3}}{2}kD_1,\pm jD_1\}$ & \multirow{ 2}{*}{$N_k=\lfloor \frac{\left(D_2^2-\frac{3}{4}k^2 D^2_1\right)^{\frac{1}{2}}}{D_1} \rfloor $}\\
    & & & \\ 
    $\forall k \in \{ 1, \lfloor \frac{2D_2}{\sqrt{3}D_1} \rfloor\}$ & $\forall j \in \{0,1,\cdots, N_k\}$ & $\forall j \in \{0,1,\cdots, N_k\}$ & \\ \hline

    \multirow{2}{*}{$l'_k$ where $k$ is even } & $\{(\pm jD_1, -\frac{\sqrt{3}}{2}kD_1)\}$ & $\{(-\frac{\sqrt{3}}{2}kD_1,\pm jD_1\}$ & \multirow{ 2}{*}{$N_k=\lfloor \frac{\left(D_2^2-\frac{3}{4}k^2 D^2_1\right)^{\frac{1}{2}}}{D_1} \rfloor $}\\
    & & & \\ 
    $\forall k \in \{ 1, \lfloor \frac{2D_2}{\sqrt{3}D_1} \rfloor\}$ & $\forall j \in \{0,1,\cdots, N_k\}$ & $\forall j \in \{0,1,\cdots, N_k\}$ & \\ \hline
    \end{tabular}
    \label{tab:isc_general}
\end{table*}

In the first type of configuration, which we refer to as \textbf{Configuration 1}, the closest interferer is located at the intersection of the inner boundary of the annulus and the line joining $T$ and $X$ (Illustrated in Figure~\ref{fig:inter_cover}). \emph{This configuration is generated by taking a greedy iterative approximation approach, where we start with an empty $\mathcal{I}^C$ and, in each iteration, we select a point on the annulus that is closest to the receiver $X$ and is not located in the connected regions of the nodes already added to $\mathcal{I}^C$.} In the second configuration, which we refer to as \textbf{Configuration 2} (illustrated in Figure~\ref{fig:inter_cover1}), the number of closest interferers is two and they are exactly $D_1$ distance apart from each other as well as from the transmitter. \emph{With this new initial condition, we can find the rest of the nodes, again, using the greedy approach.} Now, WLOG, we assume that $T$ is located at $(0,0)$ in a 2D domain, while $X$ is located at $(d,0)$. In this 2D domain, the positions of the interfering nodes for both of these Interference Set Covers are listed in Table~\ref{tab:isc_general}. It can be easily shown that these two configurations form the bound of the interference power for any configuration within same class i.e, with similar relative position between nodes with hexagonal corner positioning. 
Next, we calculate the interference and SIR for these two configurations according to Eqn.~\eqref{eqn:pathloss1} and~\eqref{eqn:pathloss2}. Then, we choose the maximum of these two interference estimates as our interference estimate, and minimum of these two SIR estimates as our SIR estimate. We perform this using the sampling method discussed in Section~\ref{sec:select_d}, where \textbf{we collect a large number of pairs of samples from these two configurations and take the highest interference power sample (or lowest SIR sample) from each pair as a sample for our estimated bounds. }

{
However, since this is an greedy solution, the resulting Interference Set Cover combination may not include the maximum number of interferer and, therefore, does not guarantee maximum possible interference power. \todo{Now say the greedy logic includes $n$ interferes. Then according to the greedy logic, it is most likely that the top $n$ interfering nodes of the maximum power Interference Set Cover will have less or equal interference power compared to the interference power from the greedily found Interference Set Cover.}  
To guarantee that our estimated interference power is no less than the maximum possible interference power, we multiply our estimated interference power by a correction factors, $\zeta=\max \{1, \frac{N_\mathcal{I}^{max}}{|\mathcal{I}^C|} \}$, where $N_{\mathcal{I}}^{max}$ denotes the maximum number of simultaneous interferers and $|.|$ denotes the cardinality of a set. The correction factor ($\zeta$) compensates for the cardinality of the Interference Set Cover i.e., if $|\mathcal{I}^C|<N_{\mathcal{I}}^{max}$.  \textbf{We found that the number of interferers estimated from the hexagonal packing is in fact also $N_{\mathcal{I}}^{max}$ for most of the cases.} Nonetheless, we can determine the maximum number of concurrent interfering nodes ($N_\mathcal{I}^{max}$) by formulating the problem as a circle packing problem \cite{hifi2009literature} as follows.

\begin{defn}
\textbf{Pack Problem:} Maximize the number of circles with radius $\left(\frac{D_1}{2}\right)$ that can be packed inside an annulus with inner and outer radius: $\left(\frac{D_1}{2}\right)$ and $\left(D_2+\frac{D_1}{2}\right)$, respectively.
\end{defn}

\begin{lemma}
The cardinality of the solution to the Pack Problem is also the maximum cardinality of an Interference Set Cover. (Proof in Appendix~\ref{App:AppendixA}) \qed
\label{lemma:cardinality}
\end{lemma}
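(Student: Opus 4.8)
The plan is to exhibit a cardinality-preserving correspondence between Interference Set Covers and feasible solutions of the Pack Problem, so that the maximum cardinality on each side must coincide. The natural map sends each interferer located at a point $p$ to the open disk of radius $\frac{D_1}{2}$ centred at $p$. I would prove the two inequalities $N_{\mathcal{I}}^{max} \leq$ (Pack optimum) and $N_{\mathcal{I}}^{max} \geq$ (Pack optimum) separately, each by verifying that this map (respectively its inverse, which simply reads off the disk centres) carries a feasible configuration on one side to a feasible configuration on the other without changing the number of elements.

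For the forward direction, I would start from an Interference Set Cover $\mathcal{I}^C$ and place a disk of radius $\frac{D_1}{2}$ at every node. The CSMA separation constraint $d_{ij} \geq D_1$ for all $i,j \in \mathcal{I}^C$ is exactly the statement that any two such disks have disjoint interiors, since $D_1 = 2 \cdot \frac{D_1}{2}$ is the sum of their radii; hence the disks form a packing. It then remains to check containment in the target annulus: because each node sits in the transition region with $D_1 \leq d_{iT} \leq D_2$, the closest point of its disk to $T$ is at distance $d_{iT} - \frac{D_1}{2} \geq \frac{D_1}{2}$ and the farthest at distance $d_{iT} + \frac{D_1}{2} \leq D_2 + \frac{D_1}{2}$. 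Thus every disk lies inside the annulus with inner radius $\frac{D_1}{2}$ and outer radius $D_2 + \frac{D_1}{2}$, giving a feasible Pack solution of the same cardinality, and therefore $N_{\mathcal{I}}^{max} \leq$ (Pack optimum).

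For the reverse direction, I would take any optimal packing of radius-$\frac{D_1}{2}$ disks inside the annulus and read off their centres. Disjointness of the disk interiors forces the centres to be at least $D_1$ apart, recovering the CSMA condition; and a disk of radius $\frac{D_1}{2}$ contained in the annulus forces its centre to lie at distance at least $\frac{D_1}{2} + \frac{D_1}{2} = D_1$ and at most $\left(D_2 + \frac{D_1}{2}\right) - \frac{D_1}{2} = D_2$ from $T$, so each centre satisfies $d_{iT} \geq D_1$ and lies in the transition region. The centres therefore constitute an Interference Set Cover of the same size, yielding $N_{\mathcal{I}}^{max} \geq$ (Pack optimum) and hence equality.

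The step I expect to be the main obstacle is the careful bookkeeping of the boundary cases, in particular showing that the inner and outer radius offsets of exactly $\frac{D_1}{2}$ make the two feasibility regions match perfectly, so that no configuration is lost or spuriously gained in either direction. One must also reconcile the pairwise upper bound $d_{ij} \leq D_2$ appearing in Definition~\ref{def:isc} with the annulus geometry; I would argue that this bound is inactive for the extremal (maximum-cardinality) covers, or equivalently interpret the cover as living in the transition annulus $D_1 \leq d_{iT} \leq D_2$ about $T$, so that the equivalence with the Pack Problem is exact.
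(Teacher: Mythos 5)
Your proposal is correct and follows essentially the same route as the paper's own proof: both establish the cardinality-preserving correspondence (node $\leftrightarrow$ disk of radius $\frac{D_1}{2}$ centered at it), check that the CSMA separation $d_{ij}\geq D_1$ matches disjointness of disks and that the transition-region condition $D_1 \leq d_{iT} \leq D_2$ matches containment in the annulus with radii $\frac{D_1}{2}$ and $D_2+\frac{D_1}{2}$, and then conclude the optima coincide (the paper phrases this last step as a double contradiction, you as two inequalities --- logically the same). Your handling of the literal pairwise upper bound $d_{ij}\leq D_2$ in Definition~\ref{def:isc} matches the paper's implicit reading, which also ignores it and treats the cover as living in the transition annulus about $T$.
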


Note that, there exists a range of approximation solution to the circle packing problem~\cite{hifi2009literature}, which can be directly applied to solve this problem.
In this paper, we do not present any circle packing solution.
Furthermore, since it is hard to analytically prove the correctness of our estimated bounds, we validate the bounds via a set of simulation experiments in Section~\ref{sec:simul}.}

\subsection{Interference Estimation for Robotic Router Network}
In this section, we focus on the interference estimation for our application specific context of robotic wireless network in a obstacle free environment.
Before that, we make an assumption, based on two related works~\cite{williams2014route,yan2012robotic}, as follows.
\begin{assm}
\label{assm:straight}
{For a flow based robotic network in a obstacle free environment, if the goal is to optimize the flow performance in terms of SIR, the best configuration of robots allocated to that flow is to stay on the straight line joining the static endpoints.}
\end{assm}
\textbf{This assumption is justified by the work presented in~\cite{williams2014route} which shows that the best configuration of robots in order to optimize packet reception rate (which is directly related to SIR) of a flow based network is to evenly place them along the line segment joining the static endpoints. The work of Yan and Mostofi~\cite{yan2012robotic} further justify the linear arrangement of same flow nodes for Signal to Noise Ratio (SNR) based optimization goal.} 
In our analysis, we employ Assumption~\ref{assm:straight} to restrict the feasible positions of the interfering nodes, thereby, leading to better and tighter bounds on interference. In this context, we divide the interference into two components: Intra-flow interference and Inter flow interference.
These two components refer to the interference power from the nodes in the same flow as the transmitter $T$ and interference power from the nodes of different flows, respectively.
\begin{figure}[!ht]
    \centering
    \includegraphics[width=0.65\linewidth]{./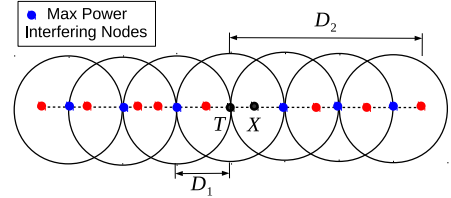} 
    \caption{Illustration of the Highest Power Intra-Flow Interference Set Cover}
    \label{fig:single_illus}
\end{figure}

\subsubsection{Intra-Flow Interference}
\label{sec:intraflow}

Our intra-flow interference estimation is based on the following lemma.

\begin{lemma}
{The maximum expected Intra-flow interference power for a link corresponds to the sum of interference powers from nodes located at distances $\{D_1,2.D_1,\cdots k.D_1\}$ from the transmitter node $T$ along the line segment joining the flow endpoints, where $k.D_1\leq D_2$.} (Proof in Appendix~\ref{App:AppendixB}) \qed
\label{lemma:2}
\end{lemma}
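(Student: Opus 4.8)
The plan is to reduce the statement to a one-dimensional packing problem by invoking Assumption~\ref{assm:straight}: since all same-flow nodes lie on the line through the flow endpoints, I place $T$ at the origin and $X$ at $(d,0)$ with $0 < d \le D_1$, and treat every intra-flow interferer as a point $x_j$ on this line. By Eqn~\eqref{eqn:lognormal}, the expected power contributed by an interferer at distance $r_{jX} = |x_j - d|$ from $X$ is $Q\,P_t\,\mathbb{E}[10^{\psi/10}]\,r_{jX}^{-\eta}$, where the fading factor $\mathbb{E}[10^{\psi/10}]$ is a constant common to all interferers. Hence, by linearity of expectation, maximizing the expected intra-flow interference is equivalent to the deterministic problem of choosing interferer positions so as to maximize $\sum_j |x_j - d|^{-\eta}$, which is a strictly decreasing function of each distance-to-$X$.

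I would then encode the CSMA feasibility constraints that define an Interference Set Cover on the line (Definition~\ref{def:isc}): every interferer must satisfy $D_1 \le |x_j| \le D_2$, since it sits in $T$'s transition region (outside the contention disc and inside the disconnected boundary), and any two simultaneous interferers must obey $|x_i - x_j| \ge D_1$. The optimization is therefore to maximize $\sum_j |x_j - d|^{-\eta}$ subject to these spacing constraints. Because $r^{-\eta}$ is monotonically decreasing, each interferer individually prefers to sit as close to $X$ as the constraints allow, so I expect the optimizer to push every point toward $X$ until a constraint becomes tight.

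The core of the argument is an exchange (greedy) step. Ordering the interferers by increasing distance from $T$, the one nearest $X$ is blocked only by the contention constraint $|x|\ge D_1$, so its best feasible location is exactly $x_1 = D_1$ (distance $D_1 - d$ to $X$), as any closer point on the $X$-side violates $|x|\ge D_1$. Given $x_1 = D_1$, the second interferer must satisfy both $|x_2|\ge D_1$ and $|x_2 - D_1|\ge D_1$, whose closest-to-$X$ solution is $x_2 = 2D_1$; inductively, the $j$-th interferer is forced to distance at least $jD_1$ from $T$ and attains its minimal distance-to-$X$ precisely at $x_j = jD_1$, for $j = 1,\dots,k$ with $k D_1 \le D_2$, i.e. $k = \lfloor D_2/D_1 \rfloor$. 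The exchange itself is the engine of optimality: if any feasible configuration places an interferer strictly farther from $X$ than its greedy slot, then sliding it inward toward $X$ (or reflecting a point lying on the far side of $T$ onto a vacant $X$-side slot) strictly decreases its distance to $X$ and thus strictly increases the objective, contradicting optimality. This produces the claimed positions $\{D_1, 2D_1, \dots, kD_1\}$.

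The step I expect to be the main obstacle is rigorously excluding configurations that trade a few near-$X$ interferers for a larger \emph{count} of interferers placed on the far side of $T$: I must show that the tightly packed $X$-side cover simultaneously maximizes the mean power (via the monotone slide/reflection argument) and carries the maximal admissible number of nodes, so that no competing cover can dominate it. This is exactly where Claim~\ref{clm:mean} is invoked, since it lets me restrict attention to the cover that jointly maximizes mean power and interferer count, and where the combinatorial line-spacing constraint (a one-dimensional analogue of the circle packing of Lemma~\ref{lemma:cardinality}) must be handled carefully to confirm that the $k = \lfloor D_2/D_1\rfloor$ interferers are indeed simultaneously realizable at the prescribed positions.
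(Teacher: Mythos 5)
Your proposal is correct and takes essentially the same route as the paper's Appendix~\ref{App:AppendixB} proof: both reduce the problem to one-dimensional $D_1$-spacing constraints along the flow line and use a slide/exchange argument based on the monotonicity of $r^{-\eta}$ to force the interferers onto the lattice positions $\{D_1, 2D_1, \ldots, kD_1\}$ on either side of $T$, treating the receiver-on-same-side and receiver-on-opposite-side cases via the same inequality. If anything your sketch is more careful than the paper's (which assumes the worst case is a rigidly shifted chain $\{D_1+\delta, 2D_1+\delta, \ldots, kD_1+\delta\}$ and slides only its first node to reach a contradiction), and the obstacle you flag at the end is actually a non-issue: the two sides of $T$ decouple (opposite-side nodes are automatically $\geq 2D_1$ apart), each side independently holds at most $\lfloor D_2/D_1 \rfloor$ nodes with the $j$-th forced to distance at least $jD_1$ from $T$, so extra far-side nodes only add nonnegative terms and linearity of expectation alone (no appeal to Claim~\ref{clm:mean}) finishes the expected-power claim.
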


Therefore, the maximum number of intra-flow interferers is $2 \left( \lfloor \frac{D_2-D_1}{D_1} \rfloor +1 \right)$, where the factor $2$ accounts for both sides.
In Figure~\ref{fig:single_illus}, we present an illustration of such scenario. Thus, the set of nodes that will result in the highest intra-flow interference power are located at $\{ (\pm jD_1,0)\} \ \ \forall j\in \{1, \cdots, \lfloor \frac{D_2-D_1}{D_1} \rfloor+1 \}$ in the 2-dimensional area of interest. Interestingly, these set of locations are same as the line $l_0$ of \textbf{Configuration 1} discussed in Section~\ref{sec:dense_rand}.


\subsubsection{Inter-Flow Interference}
\label{sec:inter-flow}
In realistic scenarios, there will be more than one flows in the network where robots assigned to different flows can interfere as well. We refer to such interference as the \emph{Inter-flow interference}. Now, the interferers can be located in the annular transition region around the transmitter, while the nodes allocated to same flow stay on the straight line joining the endpoints of the respective flow (according to Assumption~\ref{assm:straight}).
In this section, we start the bound estimation with a two flow network, followed by a network with $M$ flows. In this context, we make a key assumption about the maximum power Interference Set Cover for multi-flow scenario, as follows.

\begin{figure}[!ht]
    \centering
    \subfloat[Two flow Case]{\label{fig:new_2_flow_cover}\includegraphics[width=0.5\linewidth]{./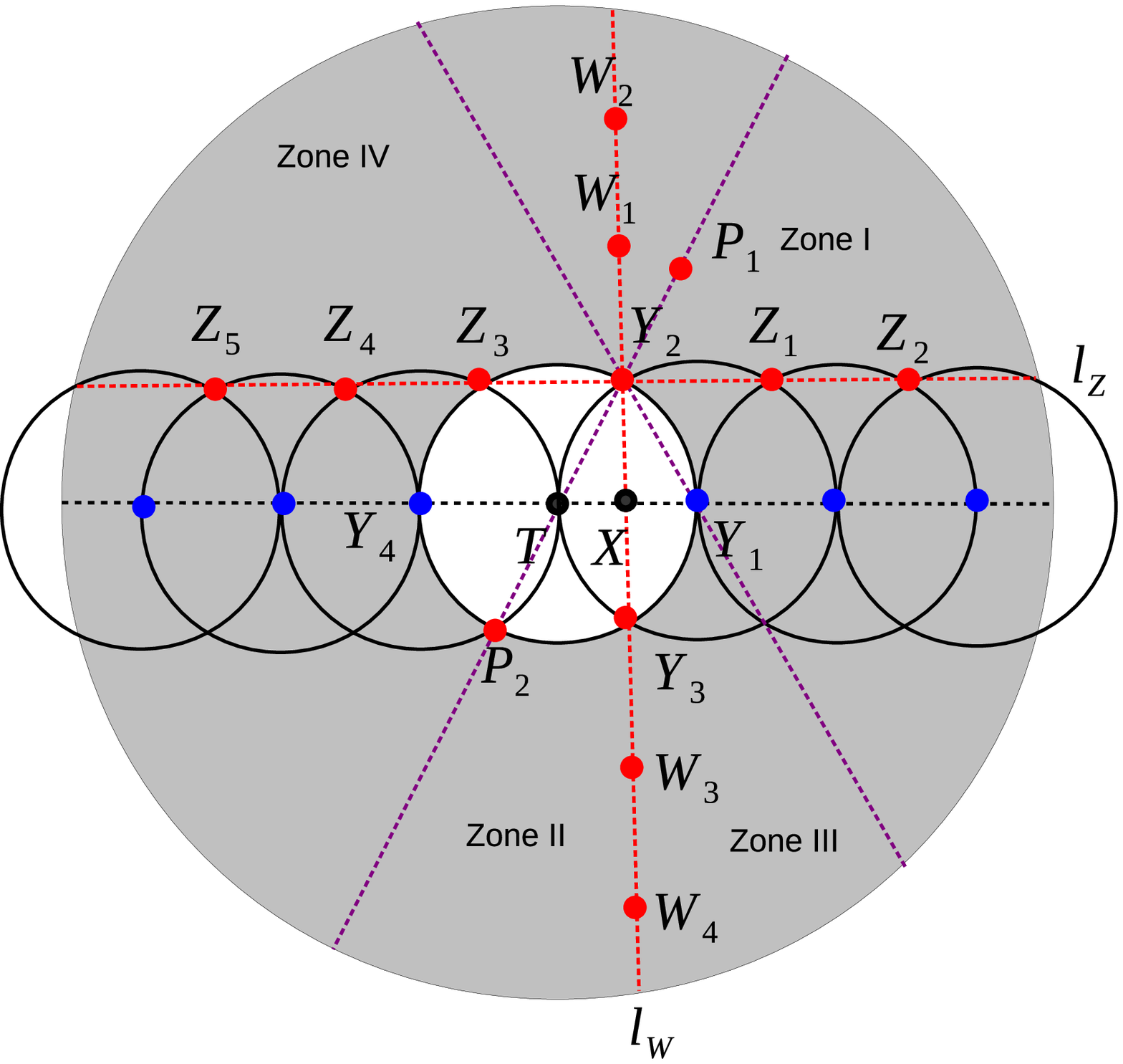} }
    \subfloat[$M$ Flow Case]{\label{fig:new_multi_flow_cover}\includegraphics[width=0.5\linewidth]{./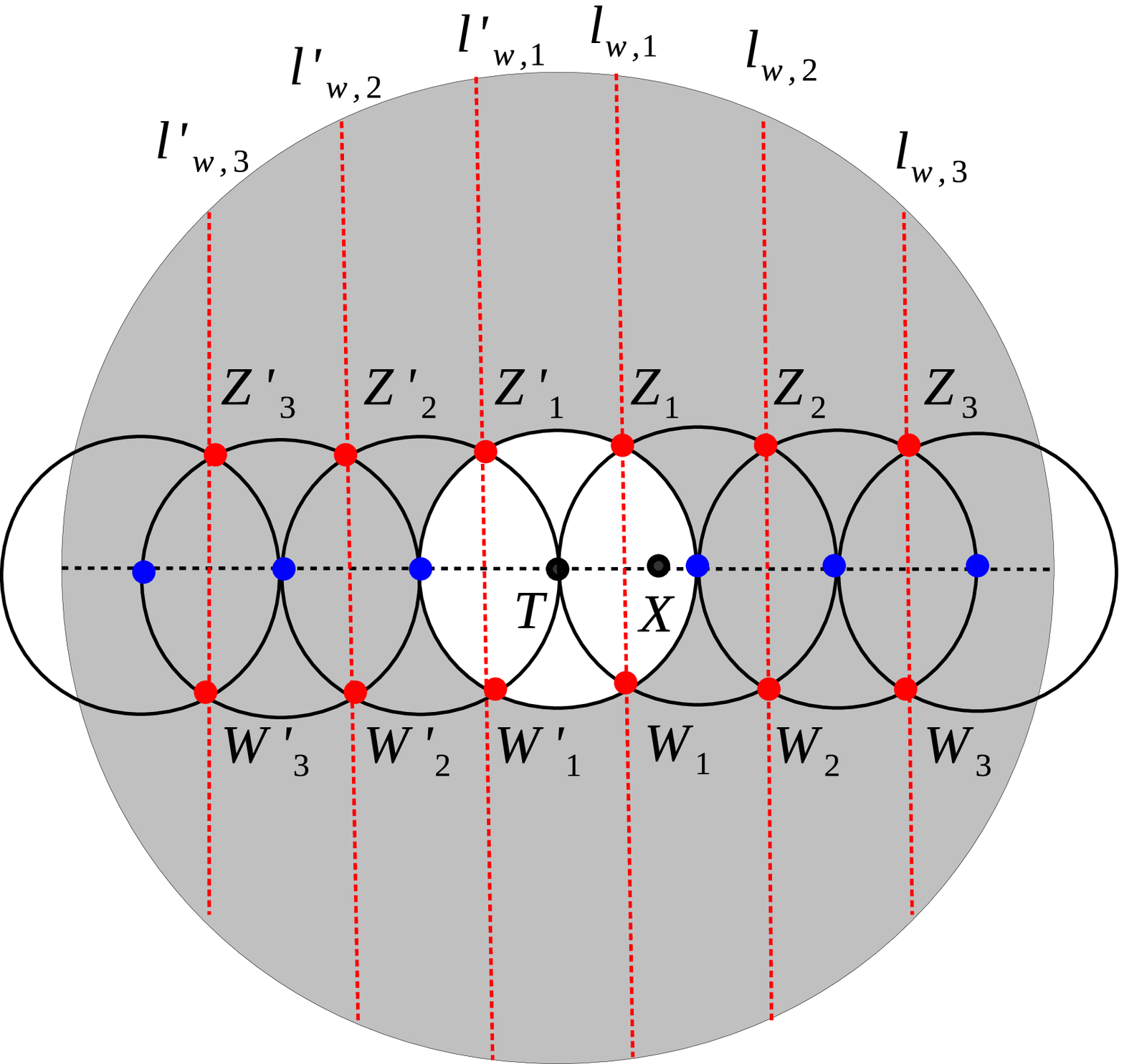} } 
    \caption{Illustration of the Multi Flow Interference Estimation (Blue Nodes: Intra-Flow Interferer, Red Nodes: Inter-Flow Interferer)}
\end{figure}
\begin{assm}
{For any transmitter-receiver node pair of a flow, the intra-flow maximum power Interference Set Cover estimated in section~\ref{sec:intraflow} is always part of the maximum power Interference Set Cover in presence of multiple flows.} \qed
\label{assm:1}
\end{assm}
\emph{The reason behind this assumption is mainly the fact that
in practical deployment, some node-pairs might not have any inter-flow interference at all (e.g., single flow network). Therefore, neglecting any of the intra-flow interfering nodes will lead to a incorrect estimate of the interference in such cases.} 
Under the given assumption, our next step is to find another line segment that will generate the maximum inter-flow interference power, for two flow cases. In general case with $M$ flows, we need to find $M-1$ other line segments such that carefully placed set of interferers on those segments result in the highest inter-flow interference power.  Now, following the greedy approach mentioned in the Section~\ref{sec:dense_rand}, the second flow should contain $Y_2$ or $Y_3$ or both, in Figure~\ref{fig:new_2_flow_cover}, since they are the next closest points to $X$ after the Intra-flow interference set cover nodes are accounted for. 

\begin{lemma}
{Among the possible line segments through $Y_2$ or $Y_3$ or both, we just need to consider $l_{\mathcal{Z}}$ and $l_{\mathcal{W}}$ in Figure~\ref{fig:new_2_flow_cover} for estimating the bound on the interference power for two flow case.} (Proof in Appendix~\ref{App:AppendixD}) \qed
\label{lemma:lines_choice}
\end{lemma}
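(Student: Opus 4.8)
The plan is to reduce the continuum of candidate second-flow lines to the two distinguished configurations by combining the reflection symmetry of the setup with a steep-power-law domination argument. First I would fix coordinates as in the excerpt, $T=(0,0)$ and $X=(d,0)$ with $0<d\le D_1$, and observe that the two closest feasible inter-flow locations $Y_2$ and $Y_3$ are exactly the intersection points of the contention circle of $T$ (radius $D_1$) with the contention circle of the nearest intra-flow node at $(D_1,0)$, namely $(\tfrac{D_1}{2},\pm\tfrac{\sqrt3}{2}D_1)$, both at distance exactly $D_1$ from $T$. These points are forced by the greedy rule of Section~\ref{sec:dense_rand} together with Assumption~\ref{assm:1}: any inter-flow node must lie in the annulus and outside the contention disks of $T$ and of every intra-flow node, and $Y_2,Y_3$ are the nearest such points to $X$.

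Next I would exploit the reflection symmetry about the line $TX$ (the $x$-axis): for every candidate line through $Y_3$ there is a mirror-image line through $Y_2$ producing an identical value of $\sum_j d_{jX}^{-\eta}$, since $X$ lies on the axis of symmetry. Hence it suffices to analyze (a) lines through $Y_2$ alone, and (b) the unique line through both $Y_2$ and $Y_3$, the vertical line $x=\tfrac{D_1}{2}$. I would then parametrize a line through $Y_2$ by its orientation angle $\theta$, place the remaining inter-flow nodes greedily along it at spacing at least $D_1$ (respecting the contention constraints and the annulus $D_1\le r\le D_2$), and write the inter-flow power as $\zeta Q P_t\sum_j d_{jX}^{-\eta}10^{\psi/10}$, whose expectation is monotone in the purely geometric sum $\sum_j d_{jX}^{-\eta}$.

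The core of the argument is that because $\eta>2$ the sum $\sum_j d_{jX}^{-\eta}$ is dominated by its one or two nearest terms, so maximizing interference reduces to pulling one or two nodes as close to $X$ as the constraints permit. This yields exactly two optimizers: the configuration realizing two nodes simultaneously at the minimal feasible distance, which can only occur when the line passes through both $Y_2$ and $Y_3$, corresponding to $l_{\mathcal{Z}}$; and the configuration that keeps a single closest node at $Y_2$ but orients the line so that the next admissible node is pulled as near to $X$ as the $D_1$-spacing allows, corresponding to $l_{\mathcal{W}}$. I would then show every intermediate orientation is dominated: rotating away from the $l_{\mathcal{W}}$ direction strictly increases the distance of the second node while $Y_2$ stays fixed, and no single-line orientation can reproduce the two-minimal-distance pattern of $l_{\mathcal{Z}}$. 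Taking the larger of the two resulting power sums gives the claimed bound.

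The step I expect to be the main obstacle is the orientation optimization, that is, ruling out \emph{all} intermediate angles rigorously. The distances $d_{jX}$ depend on $\theta$ through a nonsmooth, piecewise structure, because the number and placement of feasible nodes jump as $\theta$ varies (floor-type packing within the annulus). The clean way around this is to avoid optimizing the full sum directly and instead bound the tail: isolate the nearest one or two terms exactly, bound the remaining terms by a $\theta$-independent geometric comparison over the annulus, and verify that for the relevant range of $\eta$ the near-field contribution of $l_{\mathcal{Z}}$ or $l_{\mathcal{W}}$ upper bounds that of any other orientation. Establishing this domination uniformly over $d\in(0,D_1]$ is where the real care is needed.
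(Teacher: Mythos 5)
Your overall reduction---fix $Y_2,Y_3$ by the greedy rule, use reflection symmetry about the $TX$ axis to restrict attention to lines through $Y_2$ (plus the line through both points), place interferers greedily at $D_1$ spacing, and compare candidate orientations---is the same skeleton as the paper's proof in Appendix~\ref{App:AppendixD}. (Minor point: your labels are swapped relative to the paper; the line through both $Y_2$ and $Y_3$ is $l_{\mathcal{W}}$ there, and the line through $Y_2$ parallel to the flow is $l_{\mathcal{Z}}$.) However, there is a genuine gap at exactly the step you flag as ``the main obstacle,'' and your proposed way around it does not work. The claim that a power-law sum with $\eta>2$ is ``dominated by its one or two nearest terms,'' so that maximization reduces to pulling nodes close to $X$, is a heuristic, and your companion claim that ``rotating away strictly increases the distance of the second node while $Y_2$ stays fixed'' is false over a whole range of orientations: as the line through $Y_2$ rotates, the next admissible node on one side moves \emph{closer} to $X$ while the node on the other side moves farther away, so there is no term-by-term monotonicity. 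The paper confronts this directly: it partitions the orientations around $Y_2$ into four angular zones; in zones I and IV it gets genuine term-by-term dominance against $l_{\mathcal{Z}}$ (via $\lVert XP_1\rVert \geq \lVert XZ_1\rVert$, $\lVert XP_2\rVert \geq \lVert XZ_3\rVert$), but in zones II and III it explicitly concedes that ``there is no straight forward dominance'' and falls back on brute-force simulation to verify that the \emph{sum} for $l_{\mathcal{W}}$ dominates. Your proposal would hit the same wall: a $\theta$-independent bound on the tail cannot decide the comparison precisely in the regime where the near-field terms trade off against each other.

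A second missing ingredient is the count of interferers a candidate line can support. Two orientations through $Y_2$ intersect the annulus in chords of different lengths, hence admit different numbers of simultaneous interferers; the paper needs Lemma~\ref{lemma:chord} (monotonicity of annulus chord length in the distance from the center) to argue that an arbitrary line $l$ supports no more interferers than $l_{\mathcal{Z}}$, so that the per-term comparison actually yields a comparison of the full sums. Your tail-bounding strategy discards exactly this combinatorial information, and without it even the zones-I/IV half of the argument does not close. In short: your two candidate lines are the right ones, and your symmetry reduction matches the paper, but the domination of all intermediate orientations---the actual content of the lemma---remains unproven in your sketch, and the specific mechanisms you propose (nearest-term domination, monotone rotation) are contradicted by the geometry in the very cases that are hard.
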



The set of nodes on $l_W$ that will result in highest interference power should be located at $(\frac{D_1}{2}, \pm (\frac{\sqrt{3}}{2}D_1+jD_1))\}$ $\forall j \in \{0,1,\cdots, \lfloor \frac{\left(D_2^2-\frac{D_1^2}{4}\right)^{\frac{1}{2}}- \frac{\sqrt{3}}{2}D_1}{D_1} \rfloor \}$. On the other hand, The maximum power interference set cover node locations on $l_Z$ are same as the line $l_1$ of \textbf{Configuration 1}, listed in to Table~\ref{tab:isc_general}. 
Now, the inter flow interference power is $\max \{ P_{\mathcal{I}}^{l_\mathcal{W}}(d), P_{\mathcal{I}}^{l_{\mathcal{Z}}}(d) \}$, where $P_{\mathcal{I}}^{l_{\mathcal{W}}}$ and $ P_{\mathcal{I}}^{l_{\mathcal{Z}}}$ denotes the total maximum interference power for nodes in line $l_{\mathcal{W}}$ and $l_{\mathcal{Z}}$, respectively.

Next, we extend this concept to $M$ flow scenario i.e., maximum $M-1$ interfering flows. 
For a fixed pair of transmitter and receiver node of a flow with $M-1$ interfering flows, we need to consider two class of configurations. The mean inter-flow interference power bound of the \textbf{first class of configurations} is calculated by summing up the total interference power of the first $M'$ lines from the set $\{l_1,l'_1,l_2,l'_2,\cdots,l_K,l'_K\}$ in Figure~\ref{fig:inter_cover}, where $K=\lfloor \frac{2D_2}{\sqrt{3}D_1} \rfloor$ and $M'=\min \{M-1,2K\}$.
    Now, for the bound estimation of \textbf{second class of configurations}, we consider the line segment joining the closest pair of nodes at any point of time. 
More precisely, we choose $M'$ pairs of nodes from the pairs illustrated in Figure~\ref{fig:new_multi_flow_cover} as $\{ (Z_1,W_1),(Z_2,W_2),(Z'_1,W'_1),(Z_3,W_3), \cdots, (Z'_K,W'_K)\}$ where $K=\left(\lfloor \frac{(D_2-D_1/2)}{D_1} \rfloor+1 \right)$, $M'=\min \{M-1,2K\}$, and the pairs are sorted in terms of the respective distances to the receiver. 
Thus, the flows situated along lines $l_{\mathcal{W}, i}$ and  $l'_{\mathcal{W}, i}$ , $i \in \{1,2,\cdots,K\}$ determine the second type of interference bound in our estimation. The respective locations of the interferers are illustrated in Table~\ref{tab:isc-flow}.
Next, we compare these two bounds and take the maximum of them as the estimated interference power bound. We prove the validity of this bound through a set of MATLAB based simulation experiments, discussed in Section~\ref{sec:simul}.

\begin{table}[t]
    \centering
    \caption{Interference Set Cover Node Locations for a Flow Based  Network}
    \begin{tabular}{|c|c|}
    \hline
    Line Number & \multirow{2}{*}{} \\
    (Illustrated in Figures~\ref{fig:new_multi_flow_cover})& \\
    \hline
    \multirow{2}{*}{$l_{W,k}$} & $\{((2k+1)\frac{D_1}{2}, \pm (\frac{\sqrt{3}}{2}D_1+jD_1))\}$   \\
     & \\ 
    $\forall k \in \{ 0, \lfloor \frac{(D_2-D_1)}{2D_1} \rfloor\}$ & $\forall j \in \{0,1,\cdots, N_{W,k}\}$   \\ \hline
    \multirow{2}{*}{$l'_{W,k}$} & $\{ (-(2k+1)\frac{D_1}{2}, \pm (\frac{\sqrt{3}}{2}D_1+jD_1))\}$ \\
     & \\ 
    $\forall k \in \{ 0, \lfloor \frac{(D_2-D_1)}{2D_1} \rfloor\}$ & $\forall j \in \{0,1,\cdots, N_{W,k}\}$  \\ \hline
    
    \multicolumn{2}{|c|}{\multirow{ 3}{*}{$N_{W,k}=\lfloor \frac{\left(D_2^2-\frac{\{(2k+1)D_1\}^2}{4}\right)^{\frac{1}{2}}- \frac{\sqrt{3}}{2}D_1}{D_1} \rfloor $}} \\ 
    \multicolumn{2}{|c|}{}\\ 
    \multicolumn{2}{|c|}{}\\ 
    \hline
    \end{tabular}
    \label{tab:isc-flow}
\end{table}
\begin{figure*}[!ht]
\centering
\subfloat[]{\label{fig:SIR_Compare_deter} \includegraphics[width=0.33\linewidth, height=0.25\linewidth]{./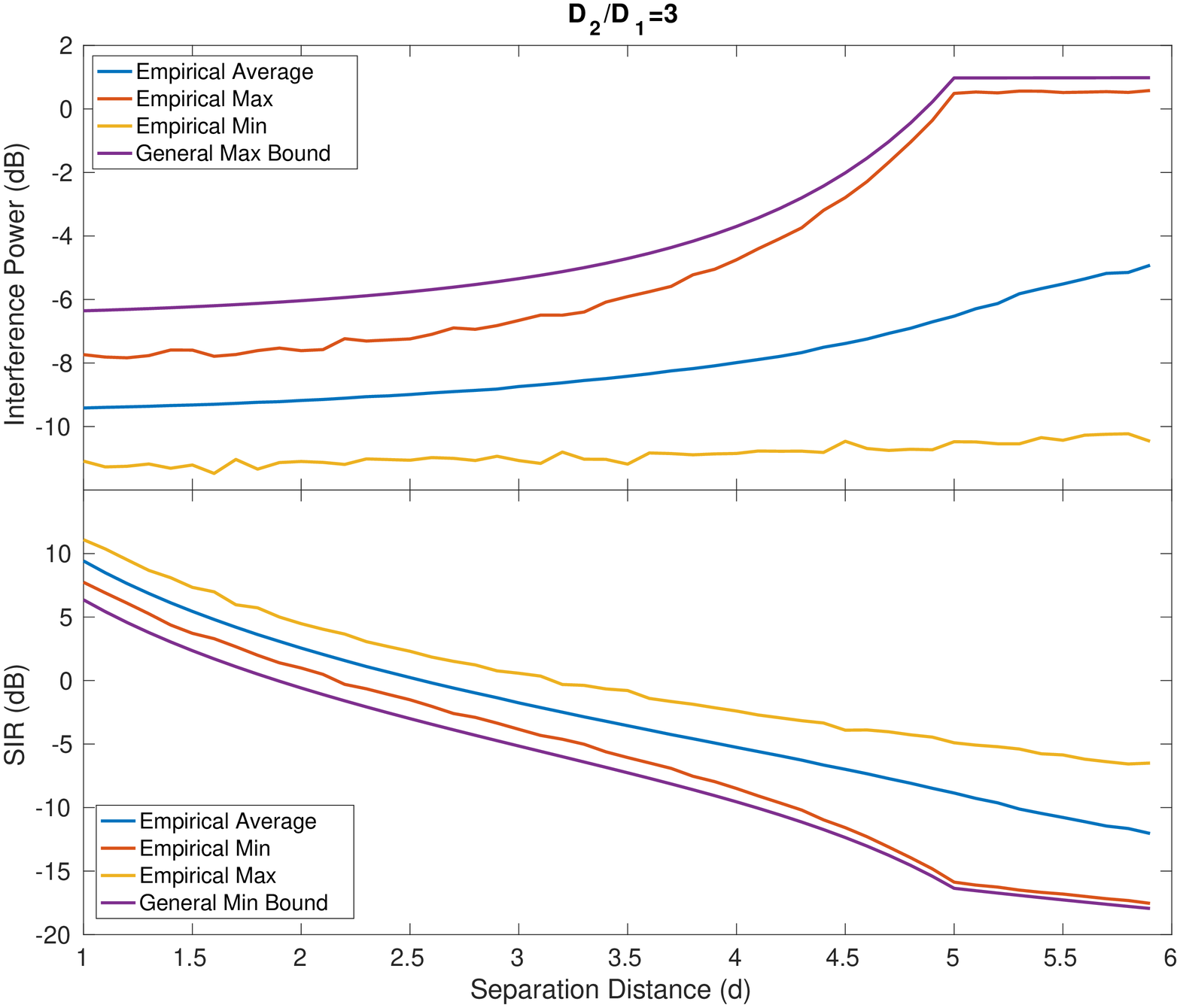}}
\subfloat[]{\label{fig:SIR_Compare_stoc} \includegraphics[width=0.33\linewidth, height=0.25\linewidth]{./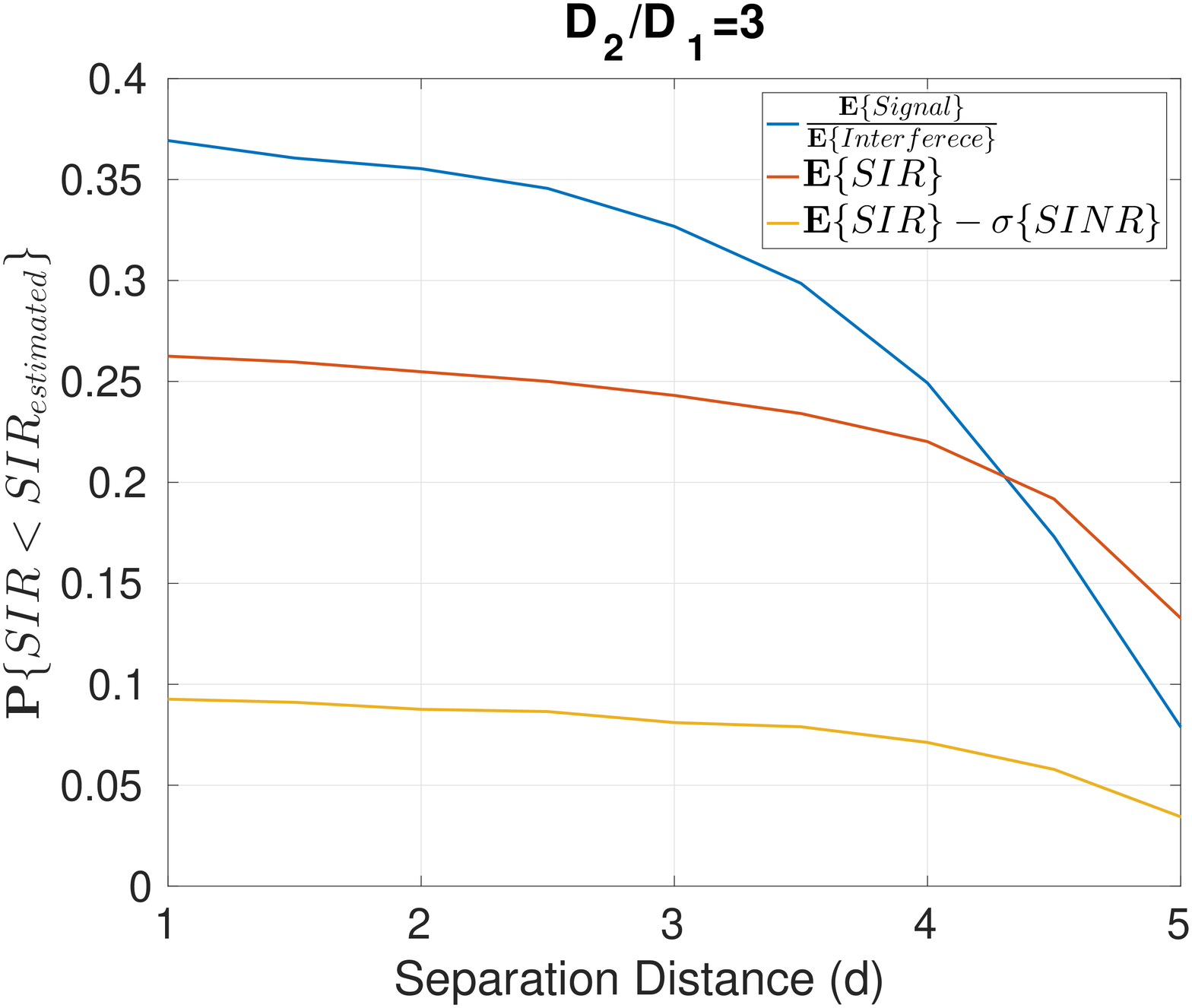}}
\subfloat[]{\label{fig:SIR_Compare_deter_ortho} \includegraphics[width=0.33\linewidth, height=0.25\linewidth]{./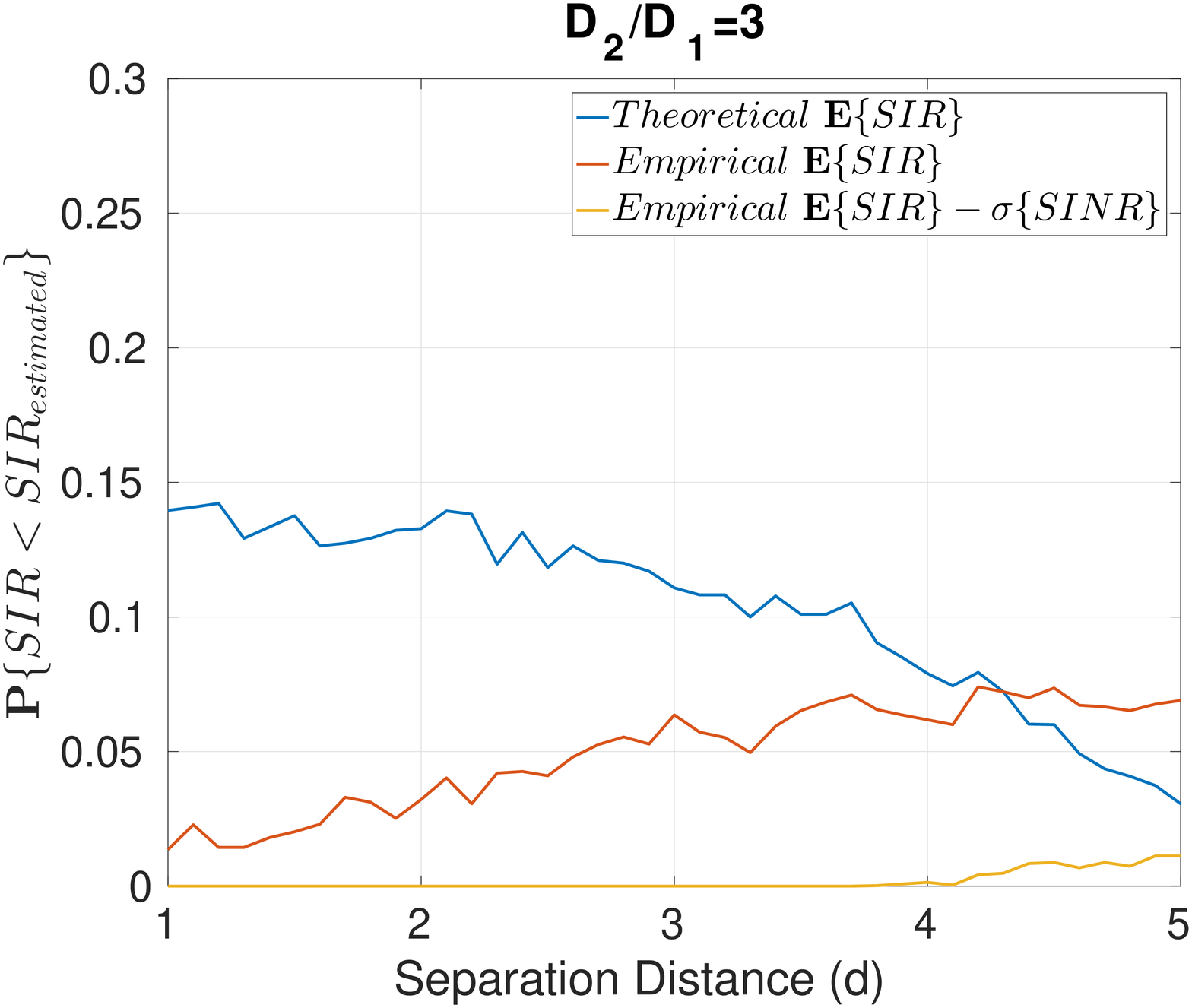}}

\caption{(a) Validation of Estimated Interference Power (Top) and SIR (Bottom) Bounds in dB, for Dense Network with No fading (b) Probability that Actual SIR is Lower than the estimated Minimum SIR with Log-Normal Fading with variance $\sigma^2=4$ (c) Probability that Actual SIR is Lower than the estimated Minimum SIR with NO Fading but in Presence of 10 Orthogonal Codes}
\end{figure*}

\section{Simulation Results}
\label{sec:simul}
In this section, we verify our proposed $d$ dependent bounds on the interference and SIR, through a set of MATLAB 8.1 based experiments performed on a machine with 3.40 GHz Intel i7 processor and 12GB RAM. For this set of experiments, we fix the values of the transmitter powers and the path loss exponent at $P_t=1$ and $\eta=2.2$, respectively. The value of $\eta=2.2$ is motivated by our experiences from real outdoor experiments (from a different project).
As a measure of the annular transition region area, we choose the ratio of $\frac{D_2}{D_1}=\{3,6\}$ as the typical RSSI CCA thresholds are separated by $10$dB to $15$dB~\cite{zeng2014first}. The absolute value of $D_1$ is randomly selected to be $6m$ as \emph{the major factors that controls the performance is the $\frac{D_2}{D_1}$ ratio, not the absolute values of $D_1$ and $D_2$.} 
With these initializations, we vary the separation distance $d$ from $1m$ to $D_1-1m$ with granularity of $0.1m$ to plot the separation distance dependent bounds.
\


\begin{algorithm}[t]
\begin{algorithmic}[1]
\Procedure{Generate}{ }
\State Initialize a Dense Set of Nodes: $\mathcal{I}^{D}$
\State Initialize $\mathcal{I}^{S}$ as a empty set
\While{$\mathcal{I}^{D}$ is not Empty}
\State Randomly select $v\in \mathcal{I}^{D}$
\State $\mathcal{I}^{S}=\mathcal{I}^{S}\cup v$
\State $\mathcal{B}_v=\{i | i\in\mathcal{I}^{D}\  \& \ d_{iv}<D_1\}$
\State $\mathcal{I}^{D}=\mathcal{I}^{D} \setminus \mathcal{B}_v$
\EndWhile
\EndProcedure
\end{algorithmic}
\caption{Generate a random set of Interferer}
\label{Alg-generate}
\end{algorithm}

First, we verify the bounds for a general dense network, where the interfering nodes are uniformly distributed over the annular transition region around $T$.
To verify the bounds, we randomly generate $1000$ sets of interfering nodes, for a fixed value of $d$, using Algorithm~\ref{Alg-generate}.
In Figure~\ref{fig:SIR_Compare_deter}, we compare our estimated interference power and estimated SIR, with the interference powers and SIR of the generated $\mathcal{I}^{S}$ sets, for no fading scenario and $\frac{D_2}{D_1}=3$. Figure~\ref{fig:SIR_Compare_deter} clearly validates our $d$ dependent interference and SIR bounds for a general dense network in absence of fading. Next, we perform similar experiments but in the presence of log normal fading of variance $\sigma^2=4$ and $\frac{D_2}{D_1}=3$. In this set of experiments, the estimated bounds for each value of $d$ are some probability distributions, rather than deterministic values. In this context, we empirically collect a set of $50000$ samples ($SIR(d)$) from the distributions estimated according to Eqn~\eqref{eqn:pathloss2} and estimate the mean, $\mu_{SIR_X(d)}$ and the variance of the SIR, $\sigma^2_{{SIR}_X(d)}$. Next, we collect $50000$ sample from each generated $\mathcal{I}^{S}$ and empirically compute the probabilities, $\mathbb{P} (SIR_{\mathcal{I}^{S}} < \mu_{SIR_X(d)})$ , $\mathbb{P} (SIR_{\mathcal{I}^{S}} < \mu_{SIR_X(d)} -\sigma_{{SIR}_X(d)})$ and $\mathbb{P} (SIR_{\mathcal{I}^{S}} < \frac{\mathbb{E} (Signal)}{\mathbb{E} (Interference})$. We plot the results in Figure~\ref{fig:SIR_Compare_stoc} which shows that the estimated SIR mean (from Eqn~\eqref{eqn:pathloss2}) is higher than the actual SIR for around $25\%$ of the cases, while $\mu_{SIR_X(d)}-\sigma_{{SIR_X}(d)}$ is higher than the actual SIR for only $10\%$ of the case. Thus, if we were to choose a deterministic value for the bound rather than a distribution, $\mu_{SIR_X(d)}-\sigma_{{SIR_X}(d)}$ is considered as a good estimate. Next, we use similar sampling method to generate the orthogonal code based SIR bounds when the number of codes used is $10$, while the maximum number of simultaneously interfering node is $38$ (For $D_2/D_1=3$). In this set of experiments, each node randomly selects a code from the code alphabet. But, we only sum up the interference powers of the interferers that select the same code as the transmitter. We apply the same method for each of the $\mathcal{I}^{S}$ set as well to validate our bounds and plot the probabilities $\mathbb{P} (SIR_{\mathcal{I}^{S}} < \mu_{SIR_X(d)})$ and $\mathbb{P} (SIR_{\mathcal{I}^{S}} < \mu_{SIR_X(d)} -\sigma_{{SIR}_X(d)})$ in Figure~\ref{fig:SIR_Compare_deter_ortho}, for log normal fading scenario. Figure~\ref{fig:SIR_Compare_deter_ortho} shows that our proposed bound also works well in presence of orthogonal codes.

\begin{figure*}[!ht]
\centering
\subfloat[]{\label{fig:SIR_Compare_flow} \includegraphics[width=0.33\linewidth, height=0.25\linewidth]{./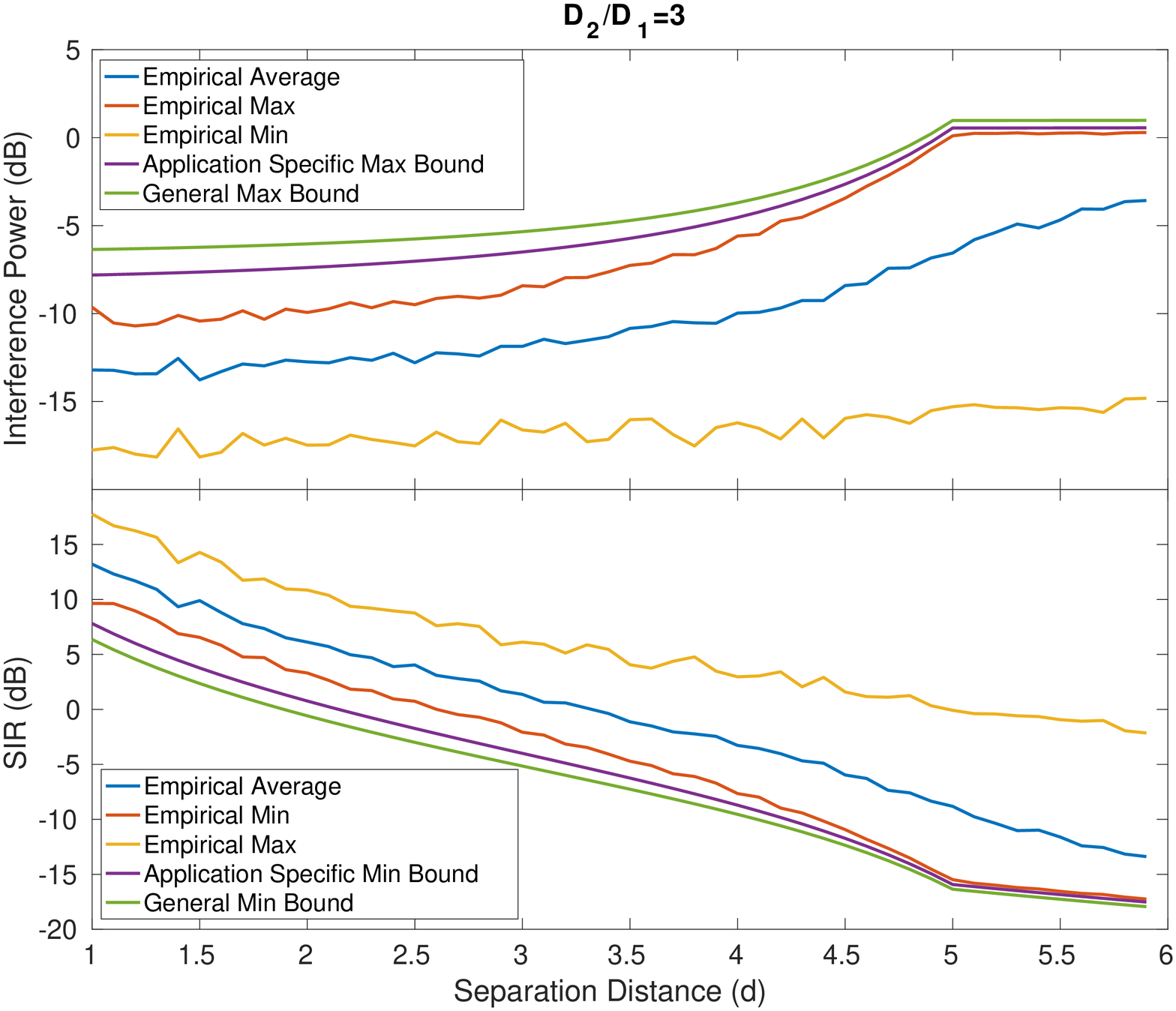}}
\subfloat[]{\label{fig:SIR_Compare_num} \includegraphics[width=0.33\linewidth, height=0.25\linewidth]{./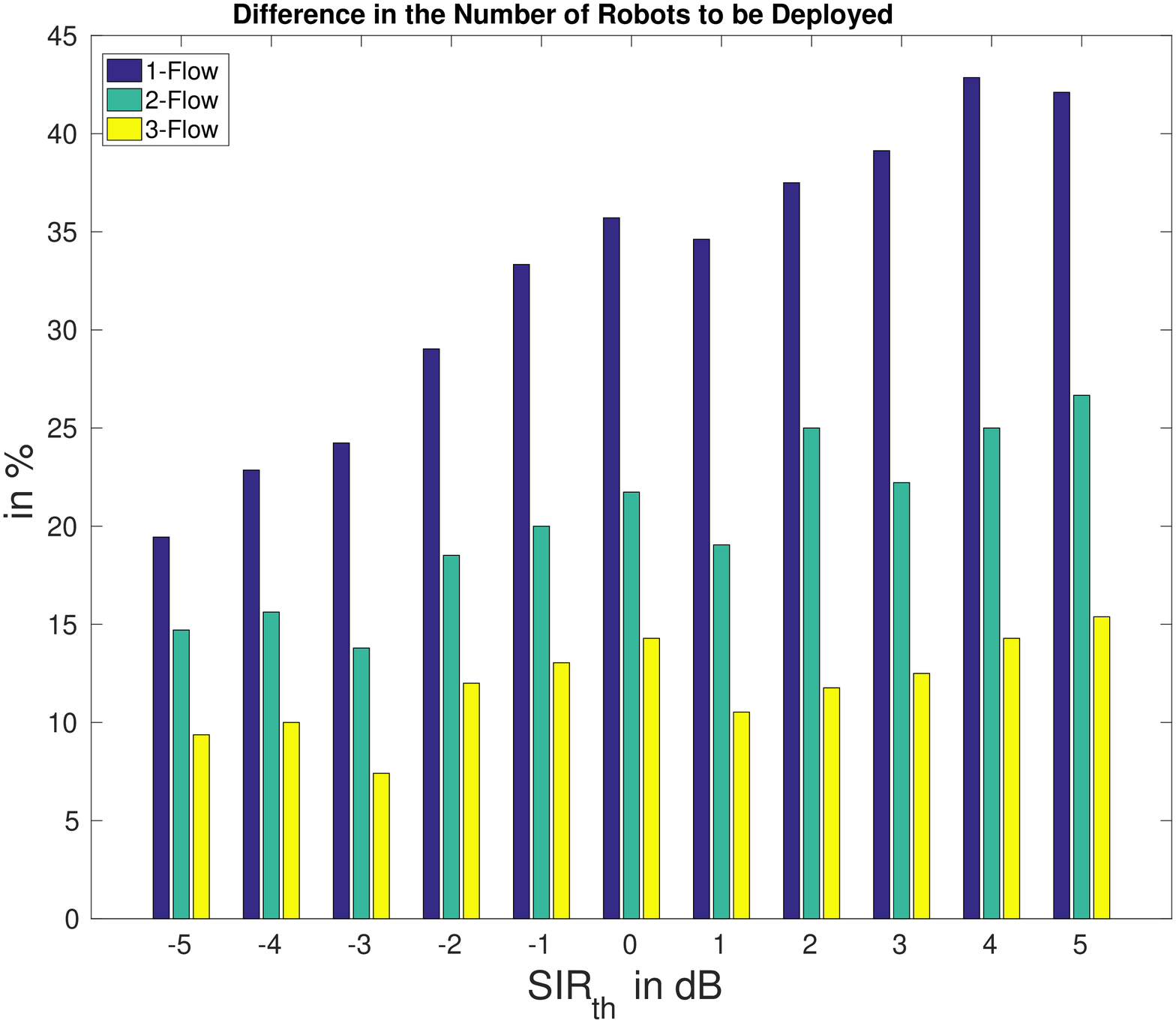}}
\subfloat[]{\label{fig:SIR_Compare_flow_stoc} \includegraphics[width=0.33\linewidth, height=0.25\linewidth]{./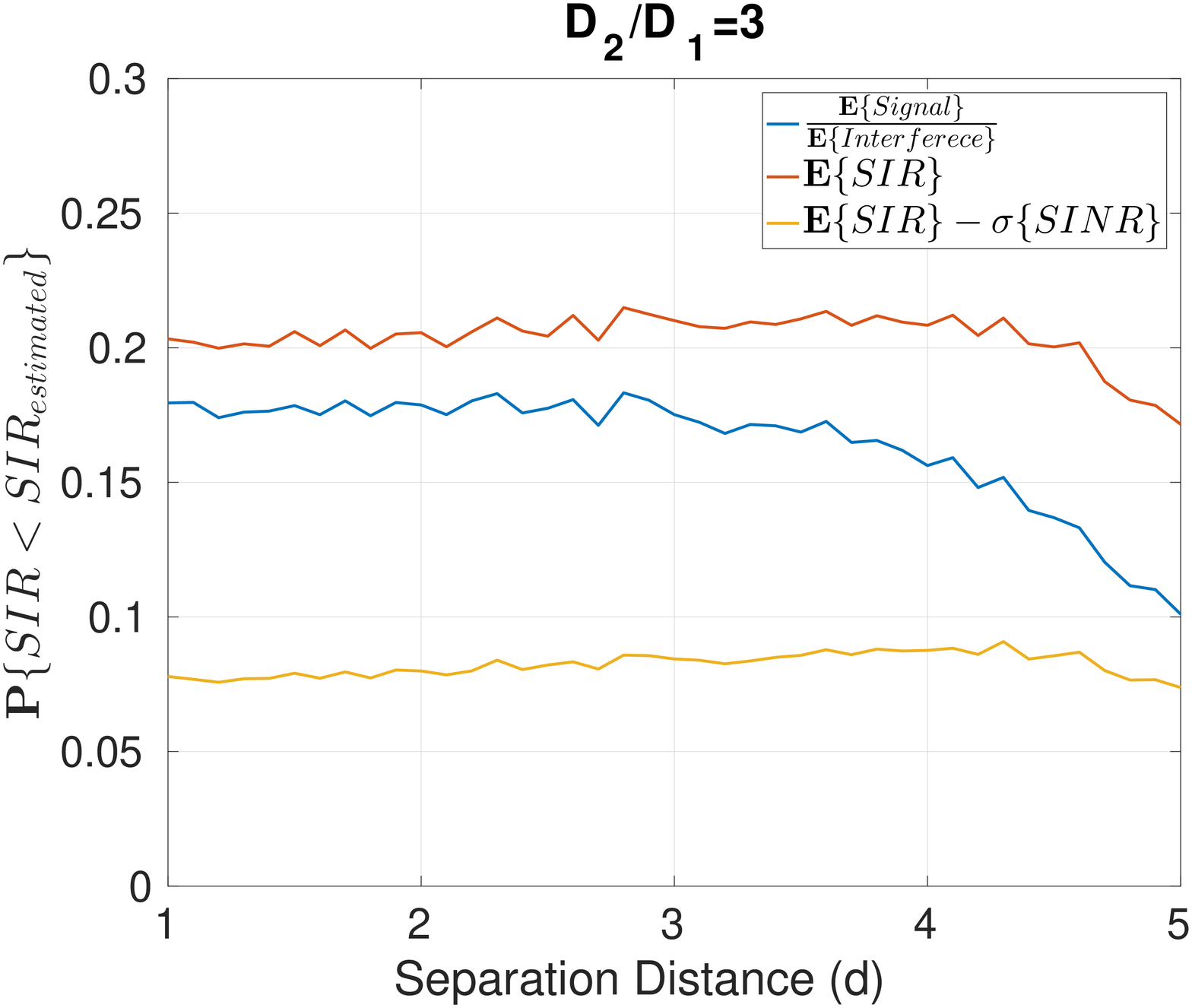}}
\caption{For a 3 Flow Network: (a) Validation of Estimated Interference Bound (Top) and SIR Bound (bottom) with No Fading (b) Illustration of Less Number of Robots to be Deployed with Our Application Specific Bound with No Fading (c) Probability that Actual SIR is Lower than the estimated Minimum SIR with Log-Normal Fading with variance $\sigma^2=4$}
\end{figure*}


Similar to the generic dense wireless network, we perform a set of bound tests for the robotic network scenario for $\frac{D_2}{D_1}=3$. In this case, we randomly select two pairs of endpoints (i.e., we consider a 3 flow network) along the circumference of the outer circle with radius $D_2$, which are the flow endpoint for two other flows. Next, we place a dense set of points along each of the randomly selected flow segments as well as the line segment joining the transmitter $T$ and the receiver $X$ to include the intra-flow interference. Then, we use  Algorithm~\ref{Alg-generate} to generate $1000$ sets of interfering nodes for each value of $d$ and for each of the $500$ randomly generated sets of flow endpoints. In all cases, the total interference power is bounded by our proposed theoretical maximum interference power, for no fading scenario, as illustrated in Figure~\ref{fig:SIR_Compare_flow}. This figure also shows that our application specific bounds are much tighter than the generic bound. In order to illustrate the impact of this improvement, we also plot the difference in the number of robots required to cover a distance of $100m$ for different values of $SIR_{th}\in [-5dB,5dB]$ in Figure~\ref{fig:SIR_Compare_num} for $\frac{D_2}{D_1}=\{3\}$. Figure~\ref{fig:SIR_Compare_num} clearly illustrates that with our improved bound, the required number of robots to guarantee some target SIR requirements, is significantly lower than the generic bound based number of robots estimations, ranging from a maximum of $\sim 45\%$ for single flow network to a minimum of  $\sim 10\%$ for a three flow network. The improvement is significant for less number of flows, as for higher number of flows ($\sim 6-7$ flows) the general dense network bound becomes dominant, which is quite intuitive.  Next, similar to the generic bound, in Figure~\ref{fig:SIR_Compare_flow_stoc} we compare the bounds in presence of fading to show that the estimated $\mu_{SIR_X(d)}-\sigma_{{SIR_X}(d)}$ is higher than the actual SIR for only $10\%$ of the case, for $\frac{D_2}{D_1}=3$.




\section{Conclusion}

In this paper, we proposed a method for estimation of the maximum interference and minimum achievable SIR for a link of length $d$ in an unknown environment while CSMA-CA or equivalent MAC layer protocols are employed. First, we demonstrate a strong dependency of these bounds on the transmitter-receiver separation distance $d$. Next, by considering two different scenarios: generic dense network and robotic router network; we demonstrate that we can formulate better and tighter bounds by exploiting the network topology structure which infact improves our main goal of estimating the number of nodes to be deployed for our robotic router network in order to guarantee some network performance.
We also perform a set of MATLAB based simulation results that validate our findings. This work is a part of our bigger project of development of a CSMA Aware Autonomous Reconfigurable Network of Wireless Robots, SWANBOT, than can adapt its configuration over time to maintain link qualities while performing some allocated task. As a part of our future work on this specific topic, we plan to develop a more formal algorithmic approach with polynomial time complexity as well as flesh out analytical details about the correctness of the bounds, if possible. Another direction of future work will be to validate this bounds with real testbed experiments.

\appendices

\section{Proof of Orthogonal Code Bound}
\label{App:AppendixO}
Say, at any time instance, the number of active interferers is $N_\mathcal{I} \in [0,N_\mathcal{I}^{max}]$. Given that $N_\mathcal{I}$ number of nodes are active and $N_\mathcal{O}\geq N_\mathcal{I}$, the probability of interference free communication is as follows.
{\footnotesize
\begin{equation}
\begin{split}
     &\mathbb{P}(\mathbbm{1}_{\mathcal{I}0} = 1|N_\mathcal{I}) = \frac{\comb{N_{\mathcal{O}}}{N_\mathcal{I}}\times N_\mathcal{I}!}{(N_{\mathcal{O}})^{N_\mathcal{I}}}
    =\prod_{i=1}^{N_\mathcal{I}} \left( 1- \frac{i-1}{N_{\mathcal{O}}} \right)\\
    \implies &\mathbb{P}(\mathbbm{1}_{\mathcal{I}0} = 1|N_\mathcal{I}^1)  \geq \mathbb{P}(\mathbbm{1}_{\mathcal{I}0} = 1|N_\mathcal{I}^2) \mbox{\ \ \ \ if $N_\mathcal{I}^1 \leq N_\mathcal{I}^2 \leq N_\mathcal{O}$ }
\end{split}
\label{eqn:mono}
\end{equation}
}
Thus, the probability of interference free transmission for $N_\mathcal{O}\geq N^{max}$, where $N^{max}=N^{max}_{\mathcal{I}}+1$, can be expressed as follows.
{\footnotesize
\begin{equation}
    \begin{split}
        \mathbb{P}(\mathbbm{1}_{\mathcal{I}0} = 1) & = \sum_{j=0}^{N^{max}} \mathbb{P}(\mathbbm{1}_{\mathcal{I}0} = 1| N_\mathcal{I}=j)\mathbb{P}(N_\mathcal{I}=j) \\
        &=\sum_{j=0}^{N^{max}} \prod_{i=1}^{j } \left( 1- \frac{i-1}{N_{\mathcal{O}}} \right) \mathbb{P}(N_\mathcal{I}=j)\\
        & \geq \prod_{i=1}^{N^{max}} \left( 1- \frac{i-1}{N_{\mathcal{O}}} \right)  \sum_{j=0}^{N^{max}} \mathbb{P}(N_\mathcal{I}=j) \mbox{\ \ Using~\eqref{eqn:mono}}\\
        & \geq \prod_{i=1}^{N^{max}} \left( 1- \frac{i-1}{N_{\mathcal{O}}} \right)
    \end{split}
    \label{eqn:ortho_proof}
\end{equation}
}

\section{Proof of Lemma~\ref{lemma:cardinality}} 
\label{App:AppendixA}
A valid Solution to the Pack Problem can be directly mapped to a valid Interference Set Cover.
To prove that, let us consider the set of centres, $S_P$,  for the circles in the Pack Problem solution.  For any valid solution to the Pack Problem, the distance between the centers of the circles are at least $R_1$ which satisfies the Interference Set Cover distance condition. Now, the center of any circle to be packed must lie in the annulus with radius $R_1$ and $R_2$ as the radius of the circles are $\frac{R_1}{2}$. Thus $S_P$ is a valid Interference Set Cover.
Next, assume the solution to the pack problem, $n$, does not contain maximum number of interferer. So there must exist an Interference Set Cover with more than $n$ interferer. However, if we formulate a set of circles with the centers to be same as the Interference Set Cover but with radius equal to $\frac{R_1}{2}$, it is also a valid circle packing solution with higher cardinality. This is a contradiction. Thus the earlier assumption is not true. Conversely, say that the solution to the Pack problem have higher cardinality than the max cardinality of Interference Set Cover, we can always map the Pack problem solution to a new Interference Set Cover with higher cardinality than the earlier solution. This is also a contradiction, thus, proves the lemma. 



\section{Proof of Lemma~\ref{lemma:2}}
\label{App:AppendixB}
According to Assumption~\ref{assm:straight} states that in the final configuration, the routers should be placed along the line segment joining the sink and source, say $Line_{opt}$.. 
Now, assume that the first interferer in the worst case interference combination is located at $D_1+\delta$ distance from the source, along $Line_{opt}$, instead of $D_1$ where $0< \delta < (D_2-D_1)$.
Since, the distance between two interferer have to be greater than $D_1$ for concurrent transmission, the resulting set of interferers are located at $\mathcal{I}_1=\{D_1+\delta, 2*D_1+\delta,\cdots, k*D_1+\delta \}$ where $k*D_1+\delta \leq D_2$. 
Now, the Interference Power is inversely proportional to distance, more specifically $d^{-\eta}$ where $2\leq \eta \leq 6$ is the path loss exponent. Now say, the receiver is located at distance $d$ from the transmitter on the same side as the interferers. Therefore, the power of the interferer located at $D_1+\delta$ is less the the power of interferer located at $D_1$ as $\frac{1}{\left(D_1-d\right)^{\eta}}\geq \frac{1}{\left(D_1+\delta-d\right)^{\eta}}$. Similarly if the receiver is located at distance $d$ from the transmitter on the other side i.e, the distance between the first interferer and the receiver is $D_1+\delta+d$, the power of the interferer located at $D_1+\delta$ is less the the power of interferer located at $D_1$ as $\frac{1}{\left(D_1+d\right)^{\eta}}\geq \frac{1}{\left(D_1+\delta+d\right)^{\eta}}$
Thus, if we exchange the first interferer position with $D_1$ i.e, $\mathcal{I}_2=\{D_1, 2*D_1+\delta,\cdots, k*D_1+\delta \}$ where $k*D_1+\delta \leq D_2$ then we get set of location with total interference power higher than that of $I_1$. This is a contradiction. Thus the earlier assumption is wrong, thus, proves the lemma.

\section{Proof of Lemma~\ref{lemma:lines_choice}}
\label{App:AppendixD}
To prove this, we first introduce another lemma as follows.
\begin{lemma}
The length of the chords of an annulus with inner radius $D_1$ and outer radius $D_2$, located at $d_r< D_1$ distance from the centre increases monotonically with $d_r$. (Proof in Appendix~\ref{App:AppendixC}) \qed
\label{lemma:chord}
\end{lemma}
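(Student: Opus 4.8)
The plan is to reduce the statement to a one-variable calculus computation by writing the chord length explicitly as a function of the perpendicular distance $d_r$. First I would place the center of the annulus at the origin and, by rotational symmetry, take the line of interest to be the horizontal line $y = d_r$ with $0 \leq d_r < D_1$. Since $d_r < D_1 < D_2$, this line cuts both the inner circle of radius $D_1$ and the outer circle of radius $D_2$, so the portion of the line lying inside the annulus on one side of the inner disk is the segment running between $x = \sqrt{D_1^2 - d_r^2}$ and $x = \sqrt{D_2^2 - d_r^2}$. Hence the chord length in question is
\begin{equation}
L(d_r) = \sqrt{D_2^2 - d_r^2} - \sqrt{D_1^2 - d_r^2}.
\end{equation}

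Next I would differentiate $L$ with respect to $d_r$, obtaining
\begin{equation}
L'(d_r) = d_r\left( \frac{1}{\sqrt{D_1^2 - d_r^2}} - \frac{1}{\sqrt{D_2^2 - d_r^2}} \right).
\end{equation}
The sign of $L'$ is then immediate: because $D_2 > D_1$ we have $0 < D_1^2 - d_r^2 < D_2^2 - d_r^2$ throughout the range $0 \leq d_r < D_1$, so the bracketed difference is strictly positive, and the factor $d_r \geq 0$ gives $L'(d_r) \geq 0$, with equality only at $d_r = 0$. This shows $L$ is (strictly) monotonically increasing on $(0, D_1)$, which is exactly the claim.

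Since the differentiation itself is routine, the only real care is in the geometric setup rather than the estimates. I must first justify that for $d_r < D_1$ the line genuinely crosses both circles, so that the relevant ``chord'' is the single well-defined segment of length $L(d_r)$ and not a full chord of the outer disk; and I must state clearly that, on each side of the inner disk, the line meets the annulus in two disjoint segments of equal length, so it suffices to track one of them. I would also note explicitly that rotational symmetry of the annulus lets me fix the line to be horizontal without loss of generality. Once the chord is pinned down this way, the monotonicity follows directly from the sign of $L'(d_r)$ and no further work is needed.
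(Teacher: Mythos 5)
Your proof is correct and follows essentially the same route as the paper's Appendix~C argument: express the chord length as $\sqrt{D_2^2-d_r^2}-\sqrt{D_1^2-d_r^2}$, differentiate, and conclude positivity of the derivative from $D_2 > D_1$. Your added care about the geometric setup and the boundary case $d_r=0$ (where the derivative vanishes) is a minor refinement over the paper, which asserts strict positivity in a form that implicitly assumes $d_r>0$.
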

WLOG, we assume that $Y_2$ must be part of the interfering set cover.
Next, for proving this claim, we subdivide the angular region around point $Y_2$ into four regions, demonstrated in Figure~\ref{fig:new_2_flow_cover}. For region $I$ and $IV$ we can show that the maximum interference power from any flow, placed along any line in that region, is upper bounded by the interference power from a flow located on $l_{\mathcal{Z}}$ as shown in Figure~\ref{fig:new_2_flow_cover}. For any random line $l$ in Zone I, the next interfering nodes on either side of $Y_2$ are, say, $P_1$ and $P_2$ while the same for $l_Z$ are $Z_1,Z_3$, respectively. From triangular geometry, $||XP_1|| \geq ||XZ_1||$  as $||Y_2P_1||=||Y_2Z_1||=D_1$ whereas $||XP_2|| \geq ||XZ_3||$ (Due to the presence of node $Y_4$). Thus the interference power from $Z_1$ is greater than or equal to $P_1$, and the interference from $Z_3$ is greater or equal to the interference from $P_2$. This way we can show that the maximum interference power from a flow located along $l_{\mathcal{Z}}$ is always ahead of the same for $l$ with same number of interferer on either side of $Y_2$.  Furthermore, using the properties of an annulus along with Lemma~\ref{lemma:chord}, it can be easily shown that the length of $l$ is less than the length of $l_{\mathcal{Z}}$ and therefore can support less number of simultaneously interfering nodes than $l_{\mathcal{Z}}$. Thus, the maximum interference power from a flow on $l$ is less than the maximum interference power from a flow on $l_{\mathcal{Z}}$. Due to symmetry, we can similarly prove that the interference power from a flow located along any line $l$ in Zone IV is always upper bounded by the maximum interference power from a flow located along $l_{\mathcal{Z}}$.

Now, for region II and III, we claim that interference power from a flow located along any random line segment $l$ is always upper bounded by the maximum interference power of a flow located along $l_{\mathcal{W}}$. In such cases, the power from $P_2$ is less than the power from $Y_3$, whereas the power from $P_1$ is greater than the power from $W_1$, or vice versa. Thus, there is no straight forward dominance of the power from either line segment. Instead the sum of the power dominates for $l_{\mathcal{W}}$. To show this, we perform a brute force simulation algorithm where we first add up the total interference power from $Y_3$ and $W_1$, and $P_1$ and $P_2$, respectively, which verified that the former is always higher than later. Similarly, we perform simulation to show that the maximum interference power from a flow along $l$ is always upper bounded by the maximum interference power from a flow along the line $l_\mathcal{W}$.


\section{Proof of Lemma~\ref{lemma:chord}}
\label{App:AppendixC}
Lets take a random chord of the annulus, located at $d_r$ distance from the center with $d_r <D_1$. Then the length of the chord is equal to $g(d_r)=\sqrt{D_2^2-d_r^2}-\sqrt{D_1^2-d_r^2}$. Now taking derivative of $g(.)$ as follows.
{\footnotesize
\begin{equation}
    \begin{split}
        g'(d_r)&=-\frac{d_r}{\sqrt{D_2^2-d_r^2}}+\frac{d_r}{\sqrt{D_1^2-d_r^2}}\\
        &=-\frac{1}{\sqrt{(\frac{D_2}{d_r})^2-1}}+\frac{1}{\sqrt{(\frac{D_1}{d_r})^2-1}}\\
        &> 0 \ \mbox{as}\ D_2>D_1 \ \mbox{and} \ d_r<D_1
    \end{split}
\end{equation}
}
This implies that $g(.)$ is a strictly increasing function of $d_r$, which proves our lemma.
\ifCLASSOPTIONcaptionsoff
  \newpage
\fi

\balance

{
 \bibliographystyle{unsrt}
\bibliography{ref}

\begin{thebibliography}{10}

\bibitem{williams2014route}
Ryan Williams, Andrea Gasparri, and Bhaskar Krishnamachari.
\newblock Route swarm: Wireless network optimization through mobility.
\newblock In {\em Proceedings of the IEEE/RSJ International Conference on
  Intelligent Robots and Systems (IROS), 2014}. IEEE.

\bibitem{penders2011robot}
Jacques Penders, Lyuba Alboul, Ulf Witkowski, Amir Naghsh, Joan Saez-Pons,
  Stefan Herbrechtsmeier, and Mohamed El-Habbal.
\newblock A robot swarm assisting a human fire-fighter.
\newblock {\em Advanced Robotics}, 25(1-2):93--117, 2011.

\bibitem{thrun2004autonomous}
Sebastian Thrun, Scott Thayer, William Whittaker, Christopher Baker, Wolfram
  Burgard, David Ferguson, Dirk Hahnel, D~Montemerlo, Aaron Morris, Zachary
  Omohundro, et~al.
\newblock Autonomous exploration and mapping of abandoned mines.
\newblock {\em IEEE Robotics \& Automation Magazine}, 11(4):79--91, 2004.

\bibitem{nguyen2003autonomous}
Hoa~G Nguyen, Narek Pezeshkian, Michelle Raymond, Anoop Gupta, and Joseph~M
  Spector.
\newblock Autonomous communication relays for tactical robots.
\newblock Technical report, DTIC Document, 2003.

\bibitem{haenggi2011mean}
Martin Haenggi.
\newblock Mean interference in hard-core wireless networks.
\newblock {\em IEEE Communications Letters}, 15(8):792--794, 2011.

\bibitem{ganti2009interference}
Radha~Krishna Ganti and Martin Haenggi.
\newblock Interference and outage in clustered wireless ad hoc networks.
\newblock {\em IEEE Transactions on Information Theory}, 55(9):4067--4086,
  2009.

\bibitem{busson2014capacity}
Anthony Busson and Guillaume Chelius.
\newblock Capacity and interference modeling of csma/ca networks using ssi
  point processes.
\newblock {\em Telecommunication Systems}, 57(1):25--39, 2014.

\bibitem{elsawy2012modeling}
Hesham ElSawy and Ekram Hossain.
\newblock Modeling random csma wireless networks in general fading
  environments.
\newblock In {\em Proceedings of the IEEE International Conference on
  Communications (ICC), 2012}. IEEE.

\bibitem{busson2009point}
Anthony Busson and Guillaume Chelius.
\newblock Point processes for interference modeling in csma/ca ad-hoc networks.
\newblock In {\em Proceedings of the 6th ACM symposium on Performance
  evaluation of wireless ad hoc, sensor, and ubiquitous networks, 2009}. ACM.

\bibitem{cardieri2010modeling}
Paulo Cardieri.
\newblock Modeling interference in wireless ad hoc networks.
\newblock {\em IEEE Communications Surveys \& Tutorials}, 12(4):551--572, 2010.

\bibitem{hekmat2004interference}
Ramin Hekmat and Piet Van~Mieghem.
\newblock Interference in wireless multi-hop ad-hoc networks and its effect on
  network capacity.
\newblock {\em Wireless Networks}, 10(4):389--399, 2004.

\bibitem{bianchi2000performance}
Giuseppe Bianchi.
\newblock Performance analysis of the ieee 802.11 distributed coordination
  function.
\newblock {\em IEEE Journal on Selected Areas in Communications},
  18(3):535--547, 2000.

\bibitem{rappaport1996wireless}
Theodore~S Rappaport et~al.
\newblock {\em Wireless communications: principles and practice}, volume~2.
\newblock Prentice Hall PTR New Jersey, 1996.

\bibitem{zeng2014first}
Yunze Zeng, Parth~H Pathak, and Prasant Mohapatra.
\newblock A first look at 802.11 ac in action: energy efficiency and
  interference characterization.
\newblock In {\em Proceedings of IFIP Networking Conference, 2014}. IEEE.

\bibitem{safak1993statistical}
Aysel Safak.
\newblock Statistical analysis of the power sum of multiple correlated
  log-normal components.
\newblock {\em IEEE Transactions on Vehicular Technology}, 42(1):58--61, 1993.

\bibitem{hifi2009literature}
Mhand Hifi and Rym M'hallah.
\newblock A literature review on circle and sphere packing problems: models and
  methodologies.
\newblock {\em Advances in Operations Research}, 2009, 2009.

\bibitem{yan2012robotic}
Yuan Yan and Yasamin Mostofi.
\newblock Robotic router formation in realistic communication environments.
\newblock {\em IEEE Transactions on Robotics}, 28(4):810--827, 2012.

\end{thebibliography}
}




\end{document}